\title{Synthetic continued pretraining}
\author{Zitong Yang\thanks{Equal contribution. Correspondence to: \texttt{zitong@berkeley.edu, nband@cs.stanford.edu}.} \\
Department of Statistics\\
Stanford University\\
\And
Neil Band$^*$ \\
Department of Computer Science \\
Stanford University \\
\And
Shuangping Li \\
Department of Statistics\\
Stanford University\\
\AND
Emmanuel Cand\`es\\
Department of Statistics\\
Stanford University\\
\And
Tatsunori Hashimoto \\
Department of Computer Science \\
Stanford University \\
}
\begin{document}
\iclrfinalcopy
\maketitle
\addtocontents{toc}{\protect\setcounter{tocdepth}{-1}}
\begin{abstract}
Pretraining on large-scale, unstructured internet text enables language models to acquire a significant amount of world knowledge.
However, this knowledge acquisition is \emph{data-inefficient}---to learn a given fact, models must be trained on hundreds to thousands of diverse representations of it.
This poses a challenge when adapting a pretrained model to a small corpus of domain-specific documents, where each fact may appear rarely or only once.
We propose to bridge this gap with \emph{synthetic continued pretraining}: using the small domain-specific corpus to synthesize a large corpus more amenable to learning, and then performing continued pretraining on the synthesized corpus.
We instantiate this proposal with EntiGraph, a synthetic data augmentation algorithm that extracts salient entities from the source documents and then generates diverse text by drawing connections between the sampled entities.
Synthetic continued pretraining with EntiGraph enables a language model to answer questions and follow generic instructions related to the source documents without access to them.
If, instead, the source documents are available at inference time, we show that the knowledge acquired through our approach compounds with retrieval-augmented generation.
To better understand these results, we build a simple mathematical model of EntiGraph, and show how synthetic data augmentation can ``rearrange'' knowledge to enable more data-efficient learning.
\end{abstract}

\section{Introduction}
\label{sec:intro}

Language models have demonstrated a remarkable ability to acquire knowledge from unstructured text, enabling them to perform challenging knowledge-intensive tasks~\citep{gpt3, gpt4, gemini, claude, llama3, gunter2024appleintelligencefoundationlanguage}.
These successes are enabled by the combination of the next-token prediction objective \citep{shannon1951prediction} and large-scale internet data \citep{commoncrawl}.
However, it is becoming increasingly apparent that this approach is \emph{data-inefficient;} for example, a 13-year-old human acquires knowledge from fewer than 100M tokens, while state-of-art open-source language models are trained on 15T tokens \citep{conll-2023-babylm, llama3}.
Recent works have highlighted a range of related problematic phenomena, including the ``reversal curse'', where models struggle to learn the relation ``B=A'' when trained on ``A=B'' \citep{berglund2023reversal}, and the requirement that models be exposed to thousands of examples per fact for knowledge acquisition \citep{allenzhu2024physicslanguagemodels32}.

These drawbacks pose a challenge when adapting the next-token prediction paradigm to learn from small-scale corpora.
Because large-scale pretrained models already capture much of public common knowledge, further advancements will necessitate learning from the tails of the distribution \citep{longtail}: niche data that is either contained in small, private domains or appears only once or twice on the internet.
This challenge of data-efficient, parametric knowledge acquisition is becoming increasingly important as the growing compute capacity enables language model providers to exhaust publicly available data \citep{muennighoff2023scaling, villalobos2024run}.

We propose to address this problem of acquiring knowledge from small corpora with \emph{synthetic continued pretraining}.
To illustrate, consider the problem of teaching a language model a new area of mathematics, succinctly documented by a small set of authoritative textbooks.
Directly training the model on those textbooks is unlikely to be effective due to the limited volume of text (typically only tens of thousands of words), and the model will struggle to generalize from this compressed representation of knowledge. 
In contrast, learning well-established areas of mathematics like linear algebra is more straightforward because a large-scale corpus with diverse knowledge representations is accessible: for example, online lecture notes, Stack Exchange discussions, or Python implementations of the singular value decomposition.
Synthetic continued pretraining bridges this gap by first converting a small and data-constrained domain into a synthetic corpus with diverse knowledge representations, and then continuing pretraining on it.

One basic approach is to simply paraphrase or rewrite the source documents in multiple ways.
However, we demonstrate that this generic rephrasing does not cover the gap in the diversity of knowledge representations.
We repeatedly rephrase a small corpus and find that the value of incremental synthetic data quickly decreases, with downstream model performance scaling poorly.
We attribute this failure to the lack of diversity in paraphrasing alone.
In the linear algebra example, online lecture notes and Stack Exchange discussions go beyond a simple rewrite of any textbook---they provide deeper analysis and application of the underlying concepts and techniques.

\begin{figure}[t]
\centering
\includegraphics[width=\textwidth]{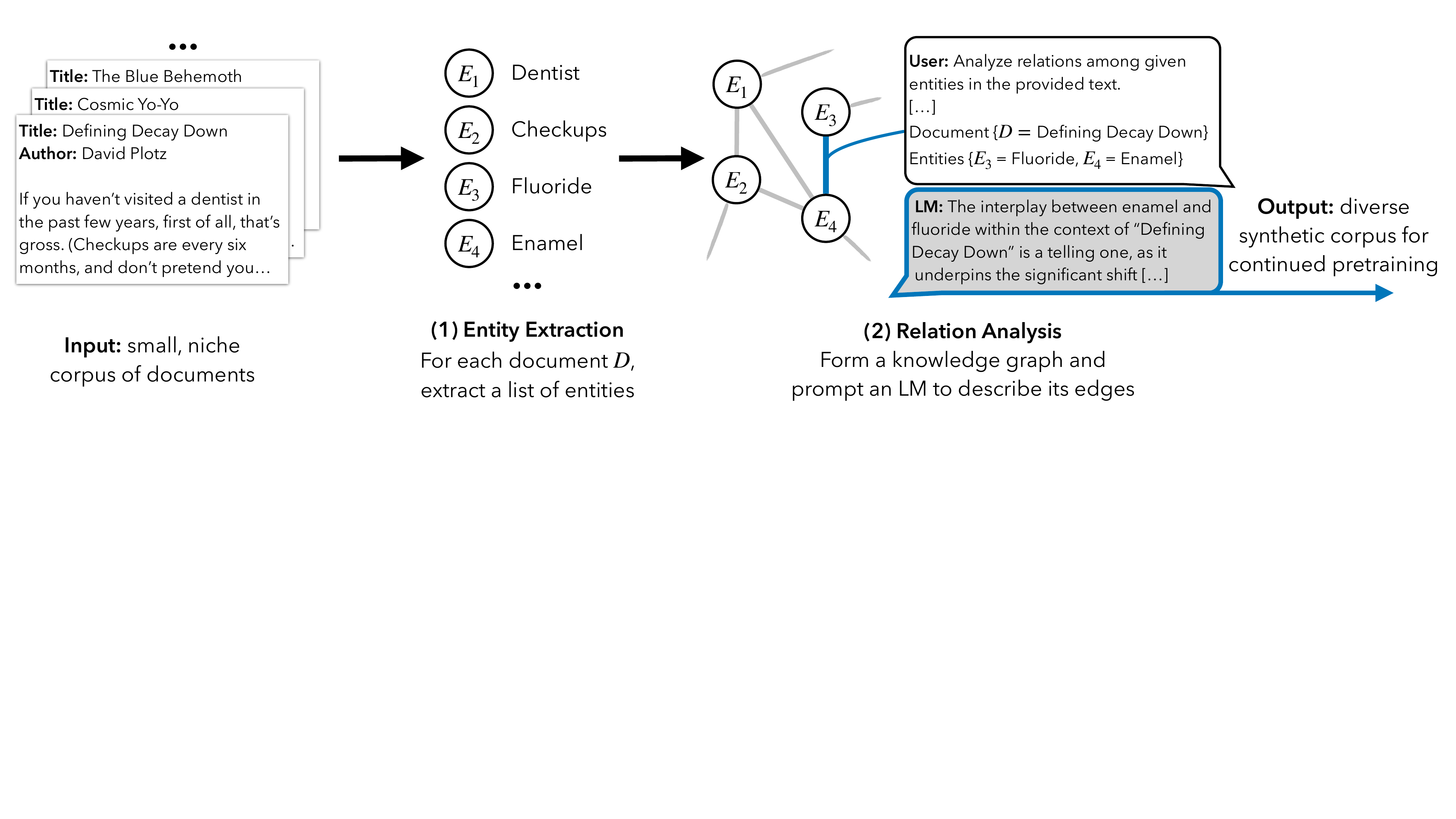}
\caption{\textbf{Synthetic continued pretraining (synthetic CPT)} converts a small source corpus into a large synthetic corpus that is amenable to learning via standard continued pretraining. 
We instantiate synthetic CPT using a synthetic data augmentation algorithm called \textbf{EntiGraph}, which forms a knowledge graph over entities extracted from documents, and then prompts an LM to synthesize a text-based representation of the graph.
}
\vspace{-0.25cm}
\label{fig:entigraph_cartoon}
\end{figure}

To address this shortcoming, we propose EntiGraph, an entity-centric augmentation algorithm.
EntiGraph first breaks down a text corpus into a list of entities and then uses a language model to generate text descriptions about relations among the extracted entities, iteratively ``filling in'' the knowledge graph underlying the corpus (Figure~\ref{fig:entigraph_cartoon}).

To concretely measure progress towards effective knowledge acquisition from small corpora, we propose an experimental setting based on a standard reading comprehension dataset (QuALITY,~\citet{quality}).
This setup enables the evaluation of synthetic data generation methods for data-efficient learning without incurring the high compute costs of pretraining from scratch.
Specifically, we evaluate methods in a scenario where we are given access to a collection of 265 books, totaling 1.3M tokens.
Our task is to synthesize a corpus such that continued pretraining on it enables a model to answer queries (e.g., multiple-choice QA or user instructions related to the book content) \emph{without} access to the source texts.

In our main experiments (\S\ref{sec:exp-open-book}), we use EntiGraph to generate 455M synthetic tokens from 1.3M real tokens using \texttt{gpt-4-turbo} \citep{gpt4}.
Then, we continually pretrain Llama 3 8B \citep{llama3} on the synthetic tokens and evaluate its QA accuracy on the QuALITY question set.
We observe a log-linear scaling trend in the accuracy as the number of tokens increases, up to 455M synthetic tokens (\S\ref{sec:exp-qa-result}).
At the endpoint, we find that synthetic continued pretraining with 455M EntiGraph tokens provides 80\% of the accuracy improvement of having those source documents available at inference time (\S\ref{sec:exp-open-book}).
Beyond QA accuracy, we also perform instruction tuning on the continually pretrained model and find that it is capable of following open-ended instructions (e.g., summarization) related to the QuALITY books (\S\ref{sec:exp-instruct-result}).

To summarize, our key contributions are as follows:
\begin{itemize}[topsep=0pt,leftmargin=4mm]
    \setlength\itemsep{0.08mm}
    \item We propose to learn from small corpora with \textbf{synthetic continued pretraining}---converting the small corpus into a large, diverse, synthetic corpus and continuing pretraining on it---and instantiate this approach using the \textbf{EntiGraph} synthetic data augmentation algorithm (\S\ref{sec:entigraph-method}).
    \item We demonstrate that continued pretraining on the EntiGraph-synthesized corpus yields a QA accuracy scaling trend that is log-linear in the synthetic token count, significantly outperforming continued pretraining on the original documents or paraphrases (\S\ref{sec:exp-qa-result}).
    Furthermore, we show that instruction tuning the EntiGraph continually pretrained model enables it to follow more diverse queries related to the source documents (\S\ref{sec:exp-instruct-result}).
    \item We complement the main experiments with an open-book setup (\S\ref{sec:exp-open-book}), providing the model with access to the source documents when answering queries.
    We demonstrate that the knowledge acquired through synthetic continued pretraining with EntiGraph is \emph{complementary} to the knowledge accessed through retrieval-augmented generation (RAG, \cite{rag})---RAG with the EntiGraph continually pretrained model outperforms RAG with the base model.
    \item Lastly, we build a mathematical model that captures the intuition behind synthetic data augmentation with EntiGraph.
    Analysis of this model provides a parametric formula for the scaling trend of a continually pretrained model's accuracy with respect to EntiGraph synthetic tokens, which closely matches our empirical observations (\S\ref{sec:entigraph-scaling}).
\end{itemize}

Practically, synthetic continued pretraining using EntiGraph enables pretrained language models to adapt to specialized domains by acquiring \emph{parametric} knowledge, rather than the non-parametric knowledge accessed through retrieval methods.
At a higher level, our approach points toward a family of synthetic data generation algorithms that allow us to convert compute into data efficiency for (continued) pretraining \citep{kaplan2020scalinglawsneurallanguage}.

\subsection{Related work}
\label{sec:related-work}
We next discuss recent work most related to our setting of synthetic data generation for continued pretraining.
In Appendix \ref{sec:additional-related-work}, we provide an extended survey of classical work on synthetic data generation and continual learning.

\paragraph{Synthetic generation of pretraining data.} 
Recent approaches synthesize \emph{pretraining} data using hierarchical prompting methods to promote dataset diversity.
\cite{eldan2023tinystories} prompt API-based LLMs to generate children's stories containing sampled keywords, and demonstrate that even small language models trained on their dataset can generate fluent text.
\cite{gunasekar2023textbooksneed} synthesize a diverse dataset of textbooks and code exercises by conditioning on topic, target audience, and function names, and later release strong LLMs pretrained on synthetic data in follow-up work \citep{li2023textbooksneediiphi15, javaheripi2023phi2, abdin2024phi3technicalreporthighly}.
However, their datasets and prompts are not publicly available.
\cite{wrap} prompt an LM to rephrase documents for pretraining, improving training efficiency.
Different from all above works, our focus is teaching a pretrained LLM the knowledge of a small corpus.
\cite{mecklenburg2024injectingnewknowledgelarge} consider task-specific finetuning and propose a fact-based synthetic QA generation procedure, but do not show improvement on generic instruction following tasks beyond simple QA.
We instead focus on teaching a model generally useful knowledge about a small corpus, untied to a particular downstream task.
\cite{ovadia2024finetuningretrievalcomparingknowledge} continually pretrain Llama 2--based language models on synthetic paraphrases of Wikipedia articles, but do not observe consistent performance improvements.
We adapt the approach of \cite{wrap} and \cite{mecklenburg2024injectingnewknowledgelarge} to our small corpus setting as the ``Rephrase baseline'' in \S\ref{sec:exp-main}.
We find that our graph-based augmentation algorithm outperforms it, likely because our approach enforces diversity through entity-based generation.

\paragraph{Continued pretraining.}
Continual or continued \emph{pretraining} works \citep{gururangan2020dont} successfully adapt pretrained large language models to broad target domains such as code \citep{rozière2024codellamaopenfoundation}, medicine \citep{chen2023meditron70bscalingmedicalpretraining}, or mathematics \citep{minerva, shao2024deepseekmathpushinglimitsmathematical, azerbayev2024llemma} by collecting massive datasets (often $>$100B tokens, shown in Table~\ref{tbl:cpt-prev-work}) and developing efficient training recipes using causal language modeling \citep{gupta2023continualpretraininglargelanguage, ibrahim2024simplescalablestrategiescontinually, parmar2024reusedontretrainrecipe}.
This work aims to extend the success of continued pretraining to small, specialized domains such as proprietary document stores.
Observing that standard continued pretraining is ineffective on small corpora, we propose a knowledge graph--inspired approach to synthesize a diverse related corpus and find it more amenable to learning.

\paragraph{Knowledge editing.}
A related line of literature updates language models with small units of factual knowledge, such as $(\text{subject, relation, object})$ tuples.
\cite{zhu2020modifying} studies a constrained fine-tuning approach, limiting the model's complexity to better suit the learning of simple factual relations.
Later approaches attempt to localize where factual knowledge is stored in Transformers and update only those weights \citep{mend, rome, memit}, or maintain an external memory of edits and prepend them as context during generation \citep{mquake, rippleeffects}.
Most relevant to our work is deductive closure training \citep{akyurek-etal-2024-deductive}, which first deduces implications of a factual edit and then finetunes the language model on those implications.
The line of knowledge editing differs from our setting in that we aim to learn from a small corpus of documents, rather than atomic, sentence-length facts.

\section{Our method}
\label{sec:method}

We focus on learning parametric knowledge from a small text corpus.
Our goal is to continually pretrain a language model to acquire the knowledge of a niche corpus of documents.
Observing that simple continued pretraining is ineffective (\S\ref{sec:exp-main}), we propose to use synthetic continued pretraining, which first uses the small corpus to synthesize a larger one more amenable to learning, and then continues pretraining on the synthetic corpus.
In this section, we first outline this problem setting and our evaluation approach in more detail (\S\ref{sec:setup}).
Then, we provide a concrete instantiation of synthetic continued pretraining using a data augmentation algorithm called EntiGraph (\S\ref{sec:entigraph-method}).

\subsection{Problem Setup}
\label{sec:setup}

\begin{table}[t]
\centering
\resizebox{\textwidth}{!}{%
\begin{tabular}{lccc}
\toprule
Study & Domain & Model Parameter Count & Total Unique CPT Tokens \\
\midrule
Minerva \citep{minerva} & STEM & 8B, 62B, 540B & 26B-38.5B \\
MediTron \citep{chen2023meditron70bscalingmedicalpretraining} & Medicine & 7B, 70B & 46.7B \\
Code Llama \citep{rozière2024codellamaopenfoundation} & Code & 7B, 13B, 34B & 520B-620B \\
Llemma \citep{azerbayev2024llemma} & Math & 7B, 34B & 50B-55B \\
DeepSeekMath \citep{shao2024deepseekmathpushinglimitsmathematical} & Math & 7B & 500B \\
SaulLM-7B \citep{colombo2024saullm7bpioneeringlargelanguage} & Law & 7B & 30B \\
SaulLM-\{54, 141\}B \citep{colombo2024saullm54bsaullm141bscaling} & Law & 54B, 141B & 520B \\
HEAL \citep{yuan2024continuedpretrainedllmapproach} & Medicine & 13B & 14.9B \\
\midrule
Our setting & Articles \& Books & 8B & 1.3M \\
\bottomrule
\end{tabular}
}
\caption{
Comparing the scale of modern continued pretraining (CPT) works with our small corpus setting.
Prior work adapts language models to broad domains with diverse, large-scale corpora.
We aim to downscale continued pretraining to small corpora; we use a corpus that is 10,000$\times$ smaller than the smallest modern corpus for domain-adaptive CPT.
}
\label{tbl:cpt-prev-work}
\end{table}

\paragraph{Continued pretraining on small corpora.} 
We focus on approaches that use continued pretraining to teach a pretrained language model the knowledge of a small set of source documents $\Ds$.
These approaches acquire ``parametric knowledge'', i.e., the knowledge of $\Ds$ is learned in the model's parameters much like during the pretraining process.

\paragraph{Synthetic \underline{c}ontinued \underline{p}re\underline{t}raining (synthetic CPT).} 
First, we apply a synthetic data generation algorithm $\Algsynth$ to convert a small corpus $\Ds$ into a synthetic corpus $\Dsynth$:
\begin{equation}
\label{eqn:entigraph-op}
\Algsynth: \Ds \longmapsto \Dsynth.
\end{equation}
Then, we perform continued pretraining on $\Dsynth$ instead of on $\Ds$.
We implement $\Algsynth$ using a language model.
A natural concern is that the language model may hallucinate and fabricate false knowledge.
Therefore, we consider \textbf{synthetic data augmentation} algorithms that condition the generation process on the source documents to improve the synthesized data's faithfulness.

\paragraph{Evaluation with knowledge-intensive queries.} 
We evaluate the quality of a synthetic data augmentation algorithm $\Algsynth$ by testing whether the downstream synthetic CPT model has effectively acquired the knowledge of $\Ds$ in its parameters.
More precisely, we curate some test queries $\Qt$ that probe the knowledge about $\Ds$ acquired by the model.
For example, in the linear algebra setting, $\Qt$ could be held-out exam questions.
To test parametric knowledge, we do not allow the model to access the source documents $\Ds$ at test time.
Therefore, the queries cannot be ambiguous without access to $\Ds$.
For example, a reading comprehension question like ``Where was he born?'' is ambiguous without context.
Altogether, we can evaluate data augmentation algorithms $\Algsynth$ for synthetic CPT using a paired source corpus and related test queries $(\Ds, \Qt)$.

\subsection{EntiGraph}
\label{sec:entigraph-method}
Next, we present EntiGraph, our instantiation of a synthetic data augmentation algorithm $\Algsynth$.
At a high level, EntiGraph generates diverse representations of knowledge from a small corpus $\Ds$ by using a prompted LLM to synthesize a knowledge graph representation of $\Ds$.
EntiGraph consists of two steps/prompts: extracting entities from the document and analyzing relations among an arbitrary subset of the entities (Figure \ref{fig:entigraph_cartoon}).
Altogether, this hierarchical prompting strategy \emph{externalizes} the problem of generating diverse synthetic text to a combinatorial structure---namely, a graph relating various entities appearing in the corpus documents.
In what follows, we provide abbreviated prompts to illustrate the algorithm, and defer full prompts to Appendix \ref{sec:appendix-entigraph-prompts}.

\paragraph{Step 1: Entity extraction.}
First, EntiGraph extracts a list of salient entities $\{E_1, E_2, \dots, E_n\}$ from the document $\Ds$ using an \texttt{entity\_extraction} prompt: \[\{E_1, E_2, \dots, E_n\} \sim \lmgen\big(\texttt{entity\_extraction}(\Ds) \big).\]
We show the abbreviated \texttt{entity\_extraction} prompt below:
\begin{qualitativeBox}
{\footnotesize
\begin{verbatim}
## System message
As a knowledge analyzer, identify salient entities in the given 
text. Include: (a) Names (b) People (c) Places (d) Concepts, etc.

## User
* Document {document_text}
\end{verbatim}
}
\end{qualitativeBox}
In the linear algebra example, $\Ds$ could be one specific linear algebra textbook.
We would expect to extract entities such as $\{E_1 = \texttt{Linear space},~ E_2 = \texttt{Vector},~ E_3 = \texttt{SVD}, \dots\}$.

\paragraph{Step 2: Relation analysis.}
Next, EntiGraph analyzes the relations among subsets of entities.
The intuition is to thoroughly explore the edges of the knowledge graph underlying the source document $\Ds$, analogous to a student writing diverse notes about a linear algebra textbook.
We apply a \texttt{relation\_analysis} prompt to describe how a subset of $k \leq n$ entities are related in the context of the source document $\Ds$, obtaining a synthetic document 
\[\widetilde{D}_{E_{i_1}\dots E_{i_k}} \sim \lmgen\big( \texttt{relation\_analysis}(D, E_{i_1}, E_{i_2}, \dots, E_{i_k}) \big).\]
Specifically, we use the prompt below (abbreviated):
\begin{qualitativeBox}
{\footnotesize
\begin{verbatim}
## System message
Analyze relations among given entities in the provided text.
Discuss how their interactions shape the document's content.

## User
* Document {book_text}
* Entities {entity_name_1}, {entity_name_2}, ...
\end{verbatim}
}
\end{qualitativeBox}
For example, if $E_1 = \texttt{Linear space}$ and $E_2 = \texttt{Vector}$, $\widetilde{D}_{E_1E_2}$ could include the text 
\texttt{Based on the textbook, a vector is an element of a linear space...}
Exhaustively enumerating all possible subsets of the $n$ extracted entities is impractical.
We choose to generate data for all pairs $\widetilde{D}_{E_{i}E_{j}}$ and triplets $\widetilde{D}_{E_i E_j E_k}$ in our experiments.

\paragraph{EntiGraph synthetic corpora.} 
Finally, we collect all sampled synthetic texts from Step 2 as the EntiGraph output: $\Denti = \{\widetilde{D}_{E_{i_1}\dots E_{i_k}}, \dots \}$.
Altogether, we described a data augmentation algorithm mapping a small source corpus $\Ds$ to a larger synthetic corpus $\Denti$, as in \eqref{eqn:entigraph-op}.

\section{Experiment setup}

\label{sec:exp-setup}
In this section, we describe in detail how we evaluate a given data augmentation algorithm $\Algsynth$.
As described in the problem setup (\S\ref{sec:setup}), we evaluate such algorithms $\Algsynth$ by evaluating whether a language model continually pretrained on their output synthetic corpus $\Algsynth(\Ds)$ can accurately answer test queries $\Qt$ about the source documents $\Ds$.

In our main experiments, we use queries that are unambiguous even without the source documents $\Ds$, and disallow the model from accessing $\Ds$ while answering the queries $\Qt$ (\S\ref{sec:setup}).
This allows us to evaluate which data augmentation algorithm best promotes the acquisition of parametric knowledge through synthetic CPT.
Later, in \S\ref{sec:exp-open-book}, we consider an open-book setting where the model can access both the source documents $\Ds$ and test queries $\Qt$ at the same time, in order to test how the parametric knowledge acquired through synthetic CPT composes with non-parametric access to knowledge through retrieval \citep{rag}.

We next introduce the small corpus and related test queries $(\Ds, \Qt)$ used in our experiments.

\paragraph{\quality~corpus $\Ds$.}
Our corpus and test queries are based on the QuALITY dataset \citep{quality}, a long-document comprehension benchmark.
The QuALITY corpus $\Ds$ is composed of 265 articles and short books on genres ranging from science fiction to journalism, with an average length of $\sim$5,000 tokens.

\paragraph{\quality~test queries $\Qt$.}
To curate the test queries $\Qt$, we use the 10-20 multiple choice questions accompanying each article in \quality.
These questions serve as high-quality knowledge probes on $\Ds$, but the query phrasing often presupposes the reading comprehension context (e.g., ``What does the author think about...'').
We remove ambiguity by contextualizing them with the corresponding article reference: ``In the context of article \{article\_name\} by \{author\_name\}, what does the author think about...''.
Altogether, this provides us with 4,609 unambiguous queries $\Qt$ to test the parametric knowledge of our continually pretrained language models.

\paragraph{Evaluation on instruction-tuned summarization.}
In addition to evaluation using the above test queries $\Qt$, we also instruction tune the continually pretrained LMs and evaluate them on more general instruction following queries.
Specifically, we evaluate their closed-book summarization abilities by prompting them to generate summaries of QuALITY articles given only title and author.

\paragraph{Performance with strong API-based LLMs.}
In our continued pretraining setting, we must select a corpus $\Ds$ that is not already well-represented in standard pretraining datasets.
As an initial test of the obscurity of the QuALITY corpus $\Ds$, we evaluate GPT-3.5 \citep{gpt3} and GPT-4 \citep{gpt4} on $\Qt$.
In the closed-book setting, we find GPT-3.5 accuracy at 44.81\% and GPT-4 accuracy at 51.30\% (Figure \ref{fig:exp-entigraph}).
In the open-book setting (full access to $\Ds$), we find GPT-3.5 accuracy at 72.60\% and GPT-4 accuracy at 86.09\% (Table \ref{tbl:exp-open}).
Based on the large ($\sim$30\%) improvement when $\Ds$ is provided, we conclude that the \quality~corpus $\Ds$ is sufficiently niche to serve as an appropriate testbed.

\section{Main experiments}
\label{sec:exp-main}
In this section, we present our main experimental results\footnote{Code \url{https://github.com/ZitongYang/Synthetic_Continued_Pretraining.git}.}.
Using GPT-4 (the \texttt{gpt-4-turbo} model as of Aug. 19, 2024) as our prompted model $\lmgen$, we apply EntiGraph to the 1.3M token \quality~corpus $\Ds$, generating a 455M token synthetic corpus\footnote{Data  \url{https://huggingface.co/datasets/zitongyang/entigraph-quality-corpus}.}.
For the remainder of the paper, we refer to the former as the ``Raw corpus'' and the latter as the ``EntiGraph corpus''.
Additional details on these corpora are provided in Appendix \ref{sec:appendix-quality}.

We continually pretrain Llama 3 8B \citep{llama3} with standard causal language modeling on the 455M token EntiGraph corpus\footnote{Model \url{https://huggingface.co/zitongyang/llama-3-8b-entigraph-quality}.}.
In \S\ref{sec:exp-cpt-procedure}, we describe our continued pretraining procedure and introduce two natural baselines.
In \S\ref{sec:exp-qa-result}, we evaluate all methods on the QuALITY test queries $\Qt$.
In \S\ref{sec:exp-instruct-result}, we show that synthetic CPT using EntiGraph is compatible with downstream instruction tuning \citep{instruct_gpt}, an important feature of real pretraining data.

\subsection{Continued pretraining procedure}
\label{sec:exp-cpt-procedure}

\begin{figure}[t]
\centering
\includegraphics[width=0.7\textwidth]{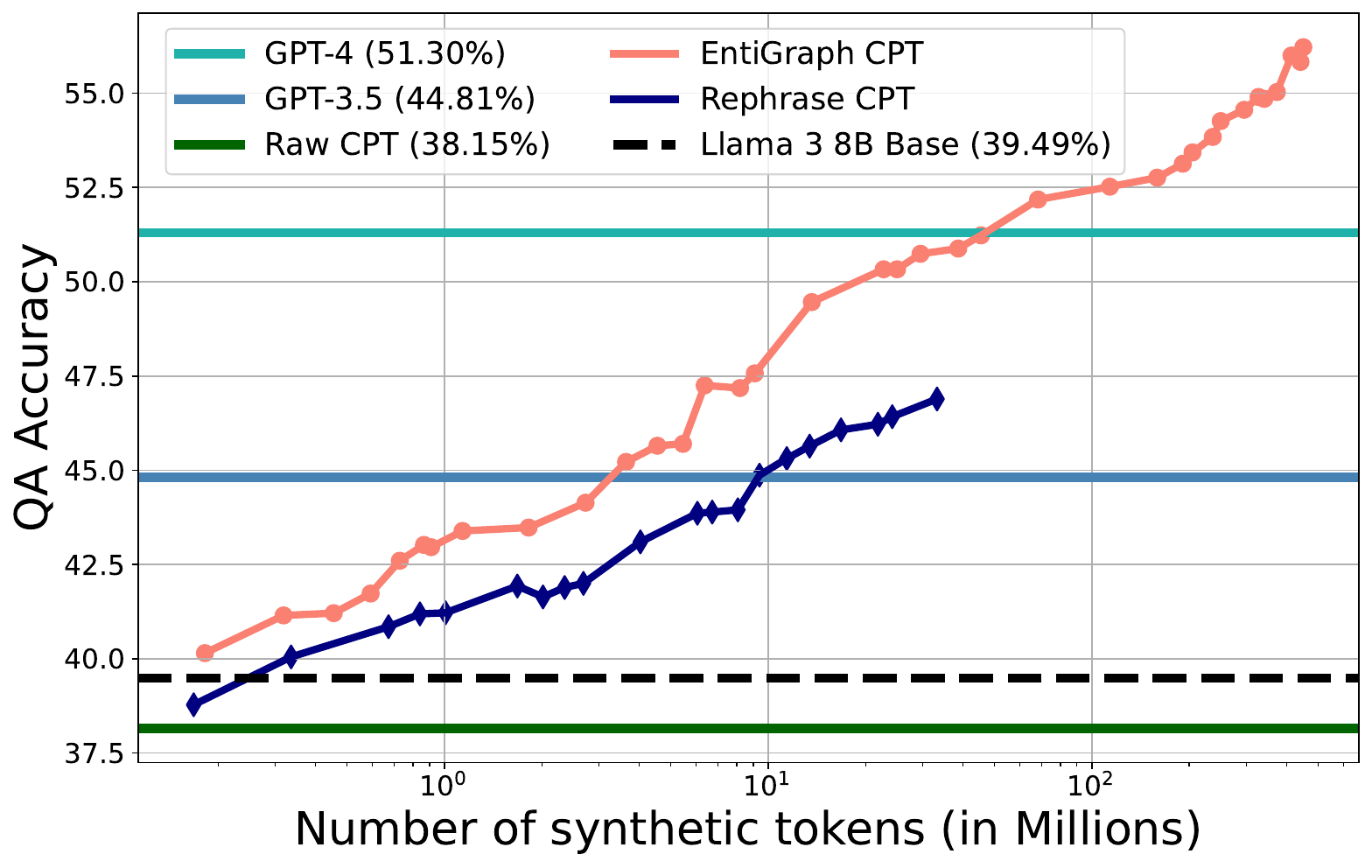}
\caption{
Accuracy on the QuALITY question set $\Qt$ ($y$-axis) as a function of the synthetic token count ($x$-axis).
The accuracy of synthetic continued pretraining using the EntiGraph data augmentation algorithm (EntiGraph CPT) scales log-linearly up to 455M tokens.}
\label{fig:exp-entigraph}
\end{figure}

\paragraph{EntiGraph CPT.}
In our main continued pretraining experiment, we continually pretrain Llama 3 8B Base on the 455M token EntiGraph corpus for 2 epochs with replay on RedPajama dataset \citep{together2023redpajama}.
For the remainder of the work, we will refer to this continually pretrained model as ``EntiGraph CPT''.
We provide details on continued pretraining setup in Appendix \ref{sec:appendix-training-details}.
Next, we describe two baselines which we compare to EntiGraph CPT in closed-book QA (\S\ref{sec:exp-qa-result}).

\paragraph{Raw CPT baseline.}
The first natural baseline is to continually pretrain Llama 3 8B Base on the 1.3M token Raw corpus (the raw QuALITY articles $\Ds$, defined in \S\ref{sec:exp-setup}).
We jointly tune the number of epochs and RedPajama replay rate, and refer to this continually pretrained model as ``Raw CPT''.
Further tuning details are provided in Appendix \ref{sec:appendix-training-details}.

\paragraph{Rephrase CPT baseline.}
Another simple synthetic data augmentation procedure is to rephrase QuALITY articles many times.
As discussed in \S\ref{sec:related-work}, \cite{wrap} and \cite{ovadia2024finetuningretrievalcomparingknowledge} execute a systematic extension of this idea.
Based on their approaches, we craft three fixed prompts (easy, medium, and hard rephrase) and repeatedly apply them  to the QuALITY articles at temperature 1.0\footnote{Note that \cite{wrap} also includes a fourth prompt that generates synthetic QA pairs.
We defer this task-specific QA finetuning approach to Appendix \ref{sec:appendix-task-specific} and focus on task-agnostic baselines that teach generic knowledge about QuALITY articles.}.
We refer to this data augmentation algorithm as the ``Rephrase baseline''.
We stopped generating paraphrases at 38M tokens, where we observed a clear gap in QA evaluations from EntiGraph CPT and a slower scaling trend (Figure \ref{fig:exp-entigraph}).
We will refer to this data as the ``Rephrase corpus'' and the continually pretrained Llama 3 8B Base models as the ``Rephrase CPT''.

\subsection{Question-answering evaluations}
\label{sec:exp-qa-result}

Next, we provide the detailed setup of our closed-book QA evaluations with \quality~test queries $\Qt$, and present results.

\paragraph{Evaluation procedure.}
Each \quality~question is a four-choice, single-answer multiple choice question (similar to MMLU, \citet{mmlu}).
We evaluate with 5-shot chain-of-thought prompting \citep{gpt3, cot} and provide our prompt in Appendix \ref{sec:appendix-qa-eval-detail}.

\paragraph{EntiGraph scaling.}
We find that continued pretraining on the 455M token EntiGraph corpus improves closed-book QA accuracy from 39.49\% (for Llama 3 8B Base) to 56.22\% (Figure \ref{fig:exp-entigraph}).
A natural question is how performance scales as we synthesize and train on more tokens with EntiGraph.
To test this, we randomly subsample without replacement the EntiGraph corpus with varying sample sizes, continually pretrain Llama 3 8B Base on each subsample, and plot QuALITY accuracy with respect to sample size in Figure \ref{fig:exp-entigraph}.
We observe log-linear scaling of the accuracy in the number of synthetic tokens used for continued pretraining, up to 455M tokens.
We will mathematically investigate the scaling properties of EntiGraph in detail in \S\ref{sec:entigraph-scaling}.
In broad strokes, we postulate that QuALITY accuracy follows a mixture-of-exponential shape and follows three stages: (i) linear growth, (ii) log-linear growth, and (iii) asymptotic plateau.\\

\paragraph{Comparison with baselines.}
Raw CPT performs even worse than Llama 3 8B Base (dashed black line in Figure~\ref{fig:exp-entigraph}).
We postulate two reasons for this:
(i) The Raw corpus follows a narrower, different distribution than the Llama 3 8B pretraining corpus, and heavily training on these tokens may harm the overall English capabilities of the model.
(ii) The limited diversity of knowledge representations in the Raw corpus leads to limited knowledge acquisition due to problems such as the reversal curse \citep{berglund2023reversal}.
Rephrase CPT scales poorly compared with EntiGraph (Figure \ref{fig:exp-entigraph}), suggesting that for synthetic CPT to scale, the synthetic data must be sufficiently diverse. 
EntiGraph tackles this problem using a hierarchical prompting strategy, which externalizes diversity to the combinatorial relationships encoded in entity knowledge graphs.

\subsection{Instruction following evaluations}
\label{sec:exp-instruct-result}

\begin{wraptable}{r}{0.4\textwidth}
\vspace{-1.0cm}
\captionsetup{font=small}
\caption{\small EntiGraph Instruct examples. }
\label{tbl:exp-instruct-example}
\vspace{-0.2cm}
\begin{tabular}{p{\linewidth}}
\toprule
\footnotesize
\textbf{Explicit reference:} Summarize ``Defining Decay Down''. \\
\midrule
\footnotesize
The article ``Defining Decay Down'' by David Plotz discusses [...] Dentists began to focus on cosmetic dentistry, [...] \\
\midrule
\footnotesize
\textbf{Implicit reference:} How has dentistry in the U.S. changed? \\
\midrule
\footnotesize
1. Increase in cosmetic dentistry [...]

2. Use of technology: [...] \\
\midrule
\footnotesize
\textbf{Cross article instruction:} Compare David Plotz's commentary on American dentistry and the movie Fight Club? \\
\midrule
\footnotesize
David Plotz's commentary style is different when he analyzes American dentistry and when he discusses the movie Fight Club. [...] \\
\bottomrule
\end{tabular}
\vspace{-0.5cm}
\end{wraptable}

In this section, we explore more general test queries beyond the QuALITY test queries $\Qt$.
Concretely, we perform instruction tuning on EntiGraph CPT to obtain EntiGraph Instruct.
We demonstrate that synthetic CPT on the EntiGraph corpus is compatible with instruction tuning: EntiGraph Instruct can directly use knowledge obtained during synthetic CPT in instruction following tasks \citep{flann}, without any test-time access to the \quality~books and articles $\Ds$.
We provide details about our instruction tuning procedure in Appendix \ref{sec:appendix-training-details}.

\paragraph{Instruction tuning qualitative examples.}
We first present a few qualitative examples to demonstrate EntiGraph Instruct's ability to follow instructions related to QuALITY articles.
As a first test, we ask the model to summarize a QuALITY article given an explicit reference to the title and author, but no access to the article itself (Table \ref{tbl:exp-instruct-example}, top row).
This article provides context for the coming examples.
Next, we show that even without an explicit reference to the title and author, knowledge of the article is stored in the model's parameters and can affect its behavior (Table \ref{tbl:exp-instruct-example}, middle row).
Finally, we provide an example where the model performs a comparison using knowledge across two articles (Table \ref{tbl:exp-instruct-example}, bottom row).
Albeit artificial, this shows that even though EntiGraph does not synthesize data that simultaneously involves multiple articles, the model can reason about their interaction using its parametric knowledge.
We provide the full responses in Table \ref{tbl:appendix-instruct-example}.

\paragraph{Evaluation metric for closed-book summarization.}
We also present quantitative metrics for summarization, a well-studied instruction following task.
We compare EntiGraph Instruct summaries of QuALITY articles with human-written summaries from sQuALITY \citep{squality}, a variation of QuALITY  with provided human summaries.
Common scalar summarization metrics such as ROUGE \citep{rouge} or BERTScore \citep{bertscore} mostly evaluate text similarity between the summary and source articles, and may not accurately reflect summarization quality for abstractive systems \citep{zhang-etal-2024-benchmarking}.

We use a simple, automated evaluation metric based on pyramid evaluation \citep{pyramidevaluation, gao-etal-2019-automated} that measures both the hallucination rate and how well the summary captures the salient claims of the original article.
Our approach uses GPT-4 to (1) split the summary into atomic claims \citep{min2023factscorefinegrainedatomicevaluation}, (2) decide whether each claim is true/false based on the source article, and (3) determine if true claims are salient to the article's main message.
We hence obtain the count of false and salient claims for each summary, normalize these by the corresponding count from the human summary, and report the average of these normalized metrics in Figure \ref{fig:exp-summaryeval}.
Appendix~\ref{sec:appendix-eval-summary-detail} provides further details.

\vspace{-0.3cm}
\paragraph{Results discussion.}
\begin{wrapfigure}{r}{0.5\textwidth}
\centering
\vspace{-0.5cm}
\includegraphics[width=\linewidth]{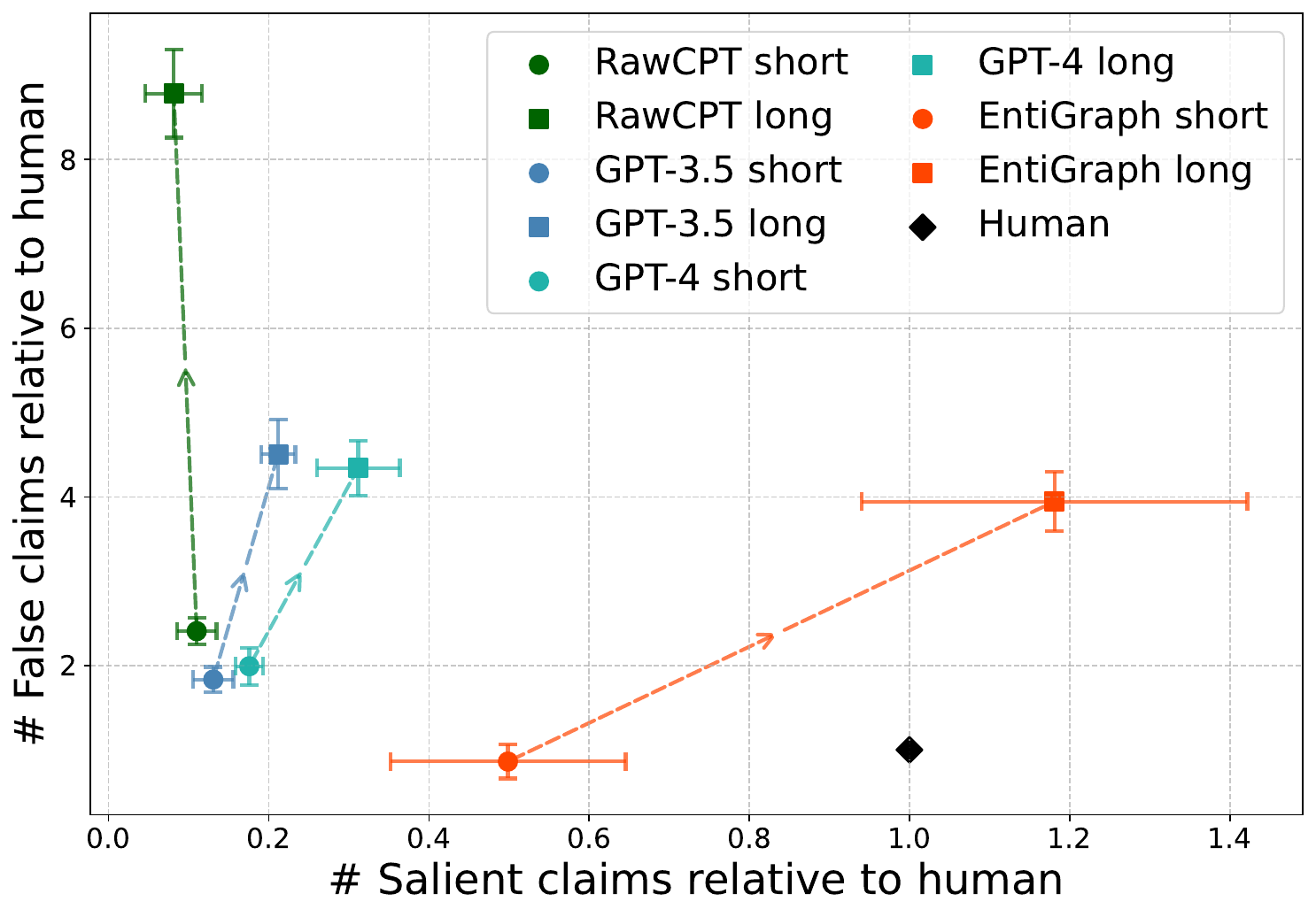}
\vspace{-0.3cm}
\captionsetup{font=small}
\caption{
\small Closed-book summarization: number of false claims ($y$-axis) versus number of salient claims ($x$-axis) normalized by the human summary.}
\label{fig:exp-summaryeval}
\vspace{-0.3cm}
\end{wrapfigure}
In Figure~\ref{fig:exp-summaryeval}, we compare four summarizers: EntiGraph Instruct, Raw Instruct, GPT-3.5, and GPT-4.
We provide each summarizer with two different prompts---asking for progressively more detailed summaries.
We provide exact prompts in Appendix \ref{sec:appendix-eval-summary-detail}, as well as a smaller-scale token-matched comparison to Rephrase CPT in Appendix \ref{sec:appendix-summary-token-matched}, where we find EntiGraph CPT has consistently lower false claims relative to Rephrase CPT.
As we request more detailed summaries, Raw Instruct consistently hallucinates and generates more false claims with little improvement in the number of salient claims.
In contrast, EntiGraph Instruct can generate more salient claims as the summary gets longer, with a small increase in the number of false claims (similar to GPT-3.5 and GPT-4 levels).
The gaps in both salient and false claim rates are sufficiently large that these results likely hold beyond our particular metric.
We complement the automated evaluation metrics above with several qualitative examples in Appendix \ref{sec:appendix-eval-summary-detail}.

\vspace{-0.1cm}
\section{Open-book experiments}
\vspace{-0.1cm}
\label{sec:exp-open-book}

Next, we consider an open-book setting with the domain-specific corpus $\Ds$ available at test time.
In this widespread setting, retrieval-augmented generation (RAG; \citet{rag, ragsurvey}) is the predominant approach.
It has strong tooling \citep{chase2022langchain, han2023comprehensivesurveyvectordatabase, pinecone}, avoids finetuning, supports continual learning as the corpus is updated  \citep{wu2024continuallearninglargelanguage}, and has high recall (proportion of queries for which the correct documents are retrieved).

Therefore, it is a natural question whether the parametric knowledge learned through synthetic CPT using EntiGraph complements the non-parametric knowledge accessed using RAG.
We answer this question by comparing a state-of-the-art RAG pipeline with and without Entigraph CPT.

\paragraph{RAG evaluation setup.} 
Our RAG pipeline follows established best practices \citep{rag, ragsurvey}.
It involves an offline stage which indexes document chunks, followed by inference-time retrieval, reranking, and placement of those chunks in a few-shot LM prompt.
Throughout, we use OpenAI \texttt{text-embedding-3-large} \citep{neelakantan2022textcodeembeddingscontrastive} as our API-based embedding model, FAISS as our similarity search index \citep{douze2024faisslibrary}, and Cohere \texttt{rerank-english-v3.0} \citep{coherererank} as our reranker.
Following the evaluation procedure detailed in \S\ref{sec:exp-main}, we evaluate parallel RAG pipelines on the \quality~multiple choice test set using few-shot chain-of-thought prompting.
All hyperparameters are tuned separately for each LM's RAG pipeline.
We refer the reader to Appendix~\ref{sec:appendix-rag} for further details on our RAG evaluation setup.

\begin{table}[ht]
\centering
\resizebox{\textwidth}{!}{%
\begin{tabular}{ccccccccc}
\toprule
\multicolumn{2}{c}{EntiGraph CPT + RAG} & \multicolumn{2}{c}{Llama 3 8B Base + RAG} & \multicolumn{2}{c}{GPT-4 + Oracle RAG} & \multicolumn{2}{c}{GPT-3.5 + Oracle RAG} \\
\cmidrule(lr){1-2}\cmidrule(lr){3-4}\cmidrule(lr){5-6}\cmidrule(lr){7-8}
Accuracy & Recall@$8$ & Accuracy & Recall@$8$ & Accuracy & Recall@$8$ & Accuracy & Recall@$8$ \\
\midrule
62.60 & 99.63 & 60.35 & 99.63 & 86.09 & 100.0 & 72.60 & 100.0 \\
\bottomrule
\end{tabular}
}
\caption{
\quality~question-answering accuracy and recall rate in the open-book retrieval-augmented generation (RAG) setting.
EntiGraph CPT and Llama 3 8B Base are used in a RAG pipeline (cf. \S\ref{sec:exp-open-book} for setup details).
Recall@$8$ is defined as the proportion of questions for which the salient article appears in the top $8$ reranked document chunks.
GPT-4 and GPT-3.5 Oracle RAG provide an upper bound with a perfect retriever, by placing the entire relevant document in-context.
}
\label{tbl:exp-open}
\end{table}

\paragraph{EntiGraph continued pretraining complements RAG.} 
We observe in Table~\ref{tbl:exp-open} that EntiGraph CPT outperforms Llama 3 8B Base, the model from which it is continually pretrained.
These results demonstrate that the knowledge internalized through synthetic CPT is complementary to that accessed during RAG, and demonstrate a competitive new recipe for small corpus QA: (1) synthetic data augmentation, (2) continued pretraining, and (3) RAG.

\paragraph{EntiGraph continued pretraining alone approaches RAG performance.} 
These results also contextualize the effectiveness of EntiGraph in the closed-book, parametric knowledge setting (\S\ref{sec:exp-main}).
Comparing Figure \ref{fig:exp-entigraph} and Table \ref{tbl:exp-open}, we observe that adding RAG to Llama 3 8B Base improves accuracy by $20.86\%$ ($39.49\% \rightarrow 60.35\%$).
On the other hand, continued pretraining of Llama 3 8B Base on the EntiGraph corpus improves accuracy by $16.73\%$ ($39.49\% \rightarrow 56.22\%$).
Hence, EntiGraph continued pretraining provides $>\!80\%$ of the absolute performance improvement of RAG, even in a small corpus setting where RAG recall is nearly perfect.

Overall, our results indicate that the parametric knowledge acquired in EntiGraph continued pretraining composes with realistic knowledge-intensive QA pipelines, and that EntiGraph continued pretraining alone---without test-time corpus access---is nearly competitive with a strong RAG baseline.

\section{Theoretical analysis of EntiGraph scaling}
\label{sec:entigraph-scaling}
It may seem surprising that simply ``rewriting'' the factual content of the source documents $\Ds$ can improve performance at all (\S\ref{sec:exp-main}), as the EntiGraph data augmentation algorithm does not explicitly add new factual information beyond $\Ds$. 
In this section, we build a mathematical model based on a stochastic process on graphs to offer an explanation for this phenomenon.
We postulate that EntiGraph does not create knowledge \emph{de novo}; rather, it simply ``rearranges'' the knowledge of $\Ds$ into a layout more amenable to learning.
For example, in $\Ds$, the entity pair $(A, B)$ may appear together in some sentences and $(B, C)$ in others.
As a result, models trained directly on $\Ds$ with a next-token prediction objective  may learn the $(A, B)$ relation and the $(B, C)$ relation, but not the relation between $A$ and $C$ \citep{akyurek-etal-2024-deductive}.
We will build a mathematical model that formalizes this intuition (\S\ref{sec:toy-model-setup}).
Based on this model, we provide a quantitative prediction that the scaling trend of EntiGraph CPT follows a mixture-of-exponential shape (\S\ref{sec:toy-moe}), which fits well with our empirically observed scaling trend (Figure \ref{fig:toy-curve-fitting}).

\subsection{Toy model setup}
\label{sec:toy-model-setup}

In this toy model, we use $\cV$ to denote the set of entities, and represent the source documents $\Ds$ with pairs of known relations $\Ds \subset \{(x, y)\in\cV^2 : x \neq y\}$.
We assume that each relation pair in $\cV^2$ appears in the source documents $\Ds$ independently at random, with probability $p$.
Mathematically, $\P\left[(x, y)\in \Ds \right]=p$ for all $x \in \cV$ and $y \in \cV$ with $x\neq y$.
We write $V = |\cV|$ and assume that $p = \lambda/V$, for some constant $\lambda>1$.

\paragraph{Training as memorization.}
We model the learning of factual knowledge as a memorization process, in which a model memorizes the relations it is explicitly trained on but does not meaningfully generalize beyond them \citep{NEURIPS2023_bf0857cb, 10.1145/3357713.3384290}.
In our knowledge graph setting, a language model's knowledge can be represented by a matrix $\bM\in\{0, 1\}^{V\times V}$ such that $\bM(x, y)=1$ if the model ``knows'' the $(x, y)$ relation and equals $0$ otherwise.
Then, training directly on the source documents $\Ds$ simply means setting all entries that appear in $\Ds$ to $1$.
This denotes that the model has memorized the relations given in the source documents.
Mathematically, we denote this model trained on $\Ds$ by the matrix $\bM_0\in\{0, 1\}^{V\times V}$, which has i.i.d. Bernoulli off-diagonal entries with mean $p$.

\paragraph{EntiGraph synthetic data augmentation.}
Given the source documents $\Ds$, we define the following iterative procedure of synthetic data generation: for each $t=1, 2, \dots$
\begin{enumerate}
    \item \textbf{Entity pair selection:} Sample $(x_t, y_t)\in \{(x, y)\in\cV^2 : x \neq y\}$ uniformly at random.
    \item \textbf{Relation analysis:} Generate the ``relation between $(x_t, y_t)$'' by performing a breadth-first search (BFS) on the directed graph represented by the adjacency matrix $\bM_0$ starting at $x_t$:
    \begin{itemize}
        \item If there exists a path $(x_t, z_t^1, z_t^2, \dots, z_t^{k_t}, y_t)$ connecting $x_t$ to $y_t$, define
        \begin{equation*}
            \cD_t = \{(x_t, z_t^1), (x_t, z_t^2), \dots, (x_t, z_t^{k_t}), (x_t, y_t)\} \cup \cD_{t-1},
        \end{equation*}
        where we assume $\cD_0 = \Ds$. The model trained on this round of synthetic data would be
        \begin{equation*}
            \bM_{t} = \bM_{t-1} + \sum_{(x,y)\in \cD_t \backslash \cD_{t-1}} \bI_{xy},
        \end{equation*}
        where $\bI_{xy}\in\{0, 1\}^{V\times V}$ is a binary matrix with $\bI_{xy}(x, y)=1$ and $0$ otherwise.
    \end{itemize}
    \begin{itemize}
        \item If no such path exists, do nothing.
    \end{itemize}
\end{enumerate}
This mirrors the relation analysis step for the EntiGraph synthetic data augmentation algorithm (introduced in \S\ref{sec:entigraph-method}).
With the setup above, the index $t$ is analogous to the number of synthetic tokens that the model has generated, and the model’s knowledge is captured by how many ones the matrix $\bM_t$ contains.
To make this connection precise, we define the link density (or accuracy) of $\bM_t$ to be
\begin{equation*}
    \mathsf{Acc}(\bM_t) = \frac{\mathbb{E}[\|\bM_t\|_1 \vert \bM_0 ] }{V(V-1)},
\end{equation*}
where the expectation is taken over the randomness arising from the synthetic data generation process and not the source documents $\Ds$. For a matrix $M$, we use $\|M\|_1$ to denote $\sum_{i,j} |M_{i,j}|$. We use the notation $\mathsf{Acc}$ as this is intended to emulate the accuracy on QuALITY test queries studied in the experimental sections (\S\ref{sec:exp-main} and \S\ref{sec:exp-open-book}).

\subsection{Rigorous upper and lower bound}
In this section, we derive rigorous upper and lower bounds on the scaling trend of $\mathsf{Acc}(\bM_t)$. We show that $\mathsf{Acc}(\bM_t)$ as a function of $t$ can be bounded above and below by two exponential functions with different growth rates.
Note that these two bounds do not necessarily imply that $\mathsf{Acc}(\bM_t)$ itself grows exponentially. We will provide a precise formula for its growth in \S\ref{sec:toy-moe} via an approximation through a Poisson branching process.

\begin{definition}
    Let $C_\lambda = (1-\rho(\lambda))^2$, where $\rho(\lambda)$ denotes the extinction probability for a Poisson$(\lambda)$ branching process (i.e., $\rho$ is the smallest solution in $[0,1]$ to the fixed-point equation $\rho=\exp(\lambda(\rho-1))$).
    For any fixed $\varepsilon>0$, we further define
    \begin{align*}
        C_\mathrm{LB} = 1-\frac{1}{V(V-1)}, \quad C_\mathrm{UB} = 1-\frac{(1+\varepsilon) \log V}{V(V-1) \log \lambda}.
    \end{align*}
\end{definition}

\begin{theorem}
\label{thm:toy}
    For any time $t \geq 1$ and any $\varepsilon>0$, the link density satisfies
    \begin{align*}
         \left( p+C_\lambda \left( 1- C_\mathrm{LB}^t \right) \right) (1-\varepsilon) \leq \mathsf{Acc}(\bM_t) \leq \left(p+C_\lambda \left( 1-  C_\mathrm{UB}^t \right) \right ) (1+\varepsilon),
    \end{align*}
    with probability $\to 1$ when $V \to \infty$.
\end{theorem}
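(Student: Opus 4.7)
The plan is to work with the exact conditional formula for $p_t(x,y) := \Pr[\bM_t(x,y)=1 \mid \bM_0]$ and then lift pointwise bounds on these probabilities to bounds on $\mathsf{Acc}(\bM_t)$ by invoking classical concentration for the giant components and typical distances of a sparse Erd\H os--R\'enyi digraph. Write $N = V(V-1)$ and let $R(x)$ denote the out-component of $x$ in $\bM_0$. For an ordered pair $(x,y)$ with $y \in R(x)$, define
\[
A_{xy} = \bigl\{(x,z) \in \cV^2 : x \neq z,\; y \text{ lies on the BFS path from } x \text{ to } z \text{ in } \bM_0\bigr\}.
\]
Since the samples $(x_s, y_s)$ are i.i.d.\ uniform on the $N$ admissible ordered pairs, and since $\bM_t(x,y)=1$ iff either $(x,y)\in \bM_0$ or some $(x_s,y_s)\in A_{xy}$ with $s\le t$, we get the exact identity
\[
p_t(x,y) = \mathbf{1}\{(x,y)\in\bM_0\} + \mathbf{1}\{(x,y)\notin\bM_0,\, y\in R(x)\}\bigl(1-(1-|A_{xy}|/N)^t\bigr),
\]
and $\mathsf{Acc}(\bM_t) = N^{-1}\sum_{(x,y)} p_t(x,y)$ is the quantity we sandwich.

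For the lower bound, the trivial inclusion $(x,y)\in A_{xy}$ forces $|A_{xy}|\geq 1$, so $1-(1-|A_{xy}|/N)^t \geq 1 - C_\mathrm{LB}^t$ pointwise, yielding
\[
\mathsf{Acc}(\bM_t) \geq \frac{|\bM_0|}{N} + \frac{|\{(x,y):y\in R(x)\}| - |\bM_0|}{N}\bigl(1-C_\mathrm{LB}^t\bigr).
\]
Two standard concentration inputs finish this step: (i) $|\bM_0|/N \to p$ by Chernoff on i.i.d.\ Bernoullis, and (ii) $|\{(x,y):y\in R(x)\}|/N \to C_\lambda = (1-\rho(\lambda))^2$ by the local Poisson--Galton--Watson coupling for supercritical Erd\H os--R\'enyi digraphs (the reachability probability $(1-\rho)^2$ arises because both the forward BFS from $x$ and the backward BFS from $y$ must survive). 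Since $p = \lambda/V = o(1)$, absorbing lower-order terms into the $(1-\varepsilon)$ slack gives the stated bound.

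For the upper bound, the map $a\mapsto 1-(1-a/N)^t$ is concave on $[0,N]$, so Jensen's inequality applied vertex-wise gives
\[
\sum_{y\in R(x)}\bigl(1-(1-|A_{xy}|/N)^t\bigr) \leq |R(x)|\,\bigl(1-(1-\bar a(x)/N)^t\bigr),\qquad \bar a(x) := \frac{1}{|R(x)|}\sum_{y\in R(x)}|A_{xy}|.
\]
A double-count swap shows $\sum_{y\in R(x)}|A_{xy}| = \sum_{z\in R(x)}d(x,z)$, where $d(x,z)$ is the graph distance in $\bM_0$; hence $\bar a(x)$ is the average BFS-depth in the out-component of $x$. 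Classical typical-distance results for sparse directed random graphs give $\max_{x,\,z\in R(x)} d(x,z)\leq (1+\varepsilon)\log V/\log\lambda$ with probability $\to 1$, so $\bar a(x)\leq(1+\varepsilon)\log V/\log\lambda$ uniformly, i.e., $1-\bar a(x)/N \geq C_\mathrm{UB}$. Summing over $x$ and applying the concentration $\sum_x |R(x)|/N \to C_\lambda$ (same input as in the lower bound) completes the argument, with the outer $(1+\varepsilon)$ absorbing the $p$ term and residual concentration errors.

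The main obstacle is the clean handling of two separate random-graph concentration statements about $\bM_0$, namely the $(1-\rho(\lambda))^2$ limit for the reachable-pair density and the uniform BFS-depth bound $(1+\varepsilon)\log V/\log\lambda$ on distances inside reachable components. The first is a direct local-neighborhood computation, but the second is delicate because it must hold \emph{simultaneously} over all $\Theta(V^2)$ reachable pairs; this typically requires a union bound against subexponential tail estimates for the exploration height of the Poisson--Galton--Watson tree. Threading these high-probability events through the pair-summation while retaining the $(1\pm\varepsilon)$ slack on both sides is the principal technical challenge; the Jensen and counting steps themselves are elementary.
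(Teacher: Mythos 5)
Your proposal follows the same blueprint as the paper's proof: write $\mathsf{Acc}(\bM_t)$ as a normalized sum of $1-(1-q_{i,j})^t$ over reachable pairs (your $q_{x,y}=|A_{xy}|/N$), lower-bound $q_{i,j}\geq 1/(V(V-1))$ via the inclusion $(x,y)\in A_{xy}$, upper-bound via Jensen on the concave map $a\mapsto 1-(1-a)^t$ plus the double-count swap identifying $\sum q_{i,j}$ with the normalized sum of shortest-path lengths, and feed in the $(1-\rho(\lambda))^2$ reachable-pair density and the $\log V/\log\lambda$ distance estimates (the paper's Lemmas~\ref{lem:conponent_size} and~\ref{lem:diameter}). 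The paper runs this exact recipe; the only cosmetic difference is that you apply Jensen vertex-by-vertex over $y\in R(x)$ where the paper applies it once globally over $\cR$.

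One step in your sketch is false as written. You claim $\max_{x,\,z\in R(x)} d(x,z)\leq(1+\varepsilon)\log V/\log\lambda$ with probability tending to $1$, and from it deduce $\bar a(x)\leq(1+\varepsilon)\log V/\log\lambda$ uniformly over $x$. The Karp/Durrett distance lemma the paper uses (Lemma~\ref{lem:diameter}) gives that bound only for pairs $(a,b)$ with \emph{both} $X(a)$ and $Y(b)$ large; if either endpoint sits in a small component, the only generic bound on $d$ is the component size $\beta\log V$, and $\beta$ need not be $\leq(1+\varepsilon)/\log\lambda$, so the uniform max bound over all reachable pairs fails. The paper repairs this by splitting $\cR$: the large-large pairs are controlled by the distance lemma, while pairs with a small endpoint number only $O(V\log V)$ and contribute $O(\log V)$ each to $\sum\ell_{a,b}$, hence are lower order. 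Your vertex-wise Jensen makes this split awkward because it demands a per-$x$ bound on $\bar a(x)$ rather than a bound on the global average; applying Jensen once more over $x$ weighted by $|R(x)|$ collapses your $\bar a(x)$'s into the global average $\sum_{(a,b)\in\cR}\ell_{a,b}/|\cR|$, which is exactly the quantity the paper bounds. With that adjustment, the two arguments coincide.
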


Even though Theorem \ref{thm:toy} provides mathematically rigorous upper and lower bounds on the scaling trend of $\mathsf{Acc}(\bM_t)$, the exact growth curve is more intricate, as we will show next.

\subsection{An analytical formula}
\label{sec:toy-moe}

For the remainder of the section, we analyze the link density $\mathsf{Acc}(\bM_t)$ using a Poisson branching process approximation of the cluster growth of vertices.
This approach yields an approximation of the form
\begin{align*}
    \mathsf{Acc}(\bM_t) \sim  p+ C_\lambda \left( 1-  \sum_{\ell=0}^\infty \frac{\lambda-1}{\lambda^{\ell+1}} \sum_{k=1}^\infty p_\ell(k)\left( 1-\frac{k}{V(V-1)} \right)^t \right),
\end{align*}
where $A\sim B$ means that $A/B$ converges to $1$ in probability as $V\rightarrow\infty$.
We refer the reader to Appendix \ref{sec:appendix-proof} for a comprehensive derivation.
Here $p_\ell$ denotes the probability mass function of the total progeny $Y_\ell$ of a Poisson$(\lambda)$ branching process at level $\ell$.
Qualitatively, for a general representation of source documents $\Ds$ beyond directed Erdős-Rényi graphs, we still expect to observe a \emph{mixture-of-exponential} scaling trend:
\begin{equation}
\label{eqn:moe}
    \mathsf{Acc}(\bM_t) \sim  p+ C \left(1-\sum_{k=1}^\infty \mu(k) \left(1- a_k \right)^t \right).
\end{equation}

\begin{wrapfigure}{r}{0.5\textwidth}
\centering
\vspace{-2em}
\includegraphics[width=\linewidth]{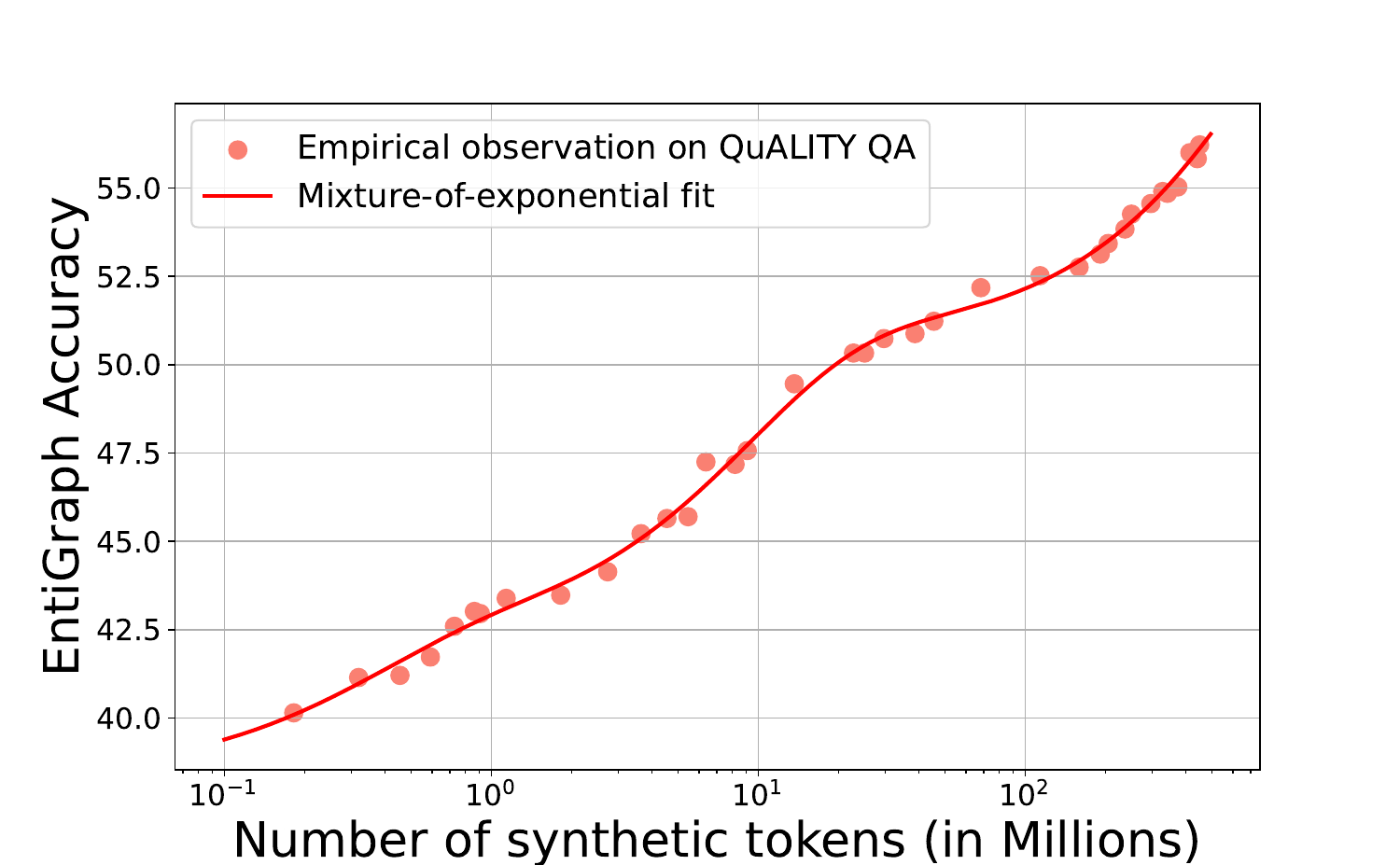}
\vspace{-1em}
\captionsetup{font=small}
\caption{\small A mixture-of-exponential functional form \eqref{eqn:moe} closely fits the scaling trend of EntiGraph CPT with respect to synthetic token count.}
\label{fig:toy-curve-fitting}
\vspace{-1em}
\end{wrapfigure}

In this context, the parameter $C$ governs the link density $\mathsf{Acc}(\bM_t)$ as $t \to \infty$.
In our model, $C$ is determined by the proportion of reachable pairs of vertices in the initial matrix $\bM_0$.
Here, we are essentially filling out the ``deductive closure'' (i.e., all the facts or relations that can be deduced from $\Ds$; \cite{deductiveclosure, akyurek-etal-2024-deductive}) of the original data---if some facts cannot be deduced, then $\mathsf{Acc}(\bM_t)$ cannot approach $1$.
The measure $\mu(\cdot)$ is the probability mass function on $k$, which controls the proportion of pairs of vertices with a specific decay rate. The parameters $\mu(\cdot)$ depend on $\bM_0$ in a more intricate manner.
We find that the formula in \eqref{eqn:moe} accurately fits the empirical scaling trend of EntiGraph CPT accuracy up to 455M synthetic tokens (Figure \ref{fig:toy-curve-fitting}).

\paragraph{Sketch of derivation.}
Intuitively, the edge $(i,j)$ will eventually be added if and only if $j$ is reachable from $i$ in the original graph $\bM_0$. This explains the limiting behavior of $\mathsf{Acc}(\bM_t)$ as $t$ approaches infinity: the proportion of links will converge to the proportion of connected vertex pairs in $\bM_0$. To understand the mixture-of-exponential functional form, consider that at the time $t$, the probability of adding each vertex pair follows an exponential pattern, with different vertex pairs exhibiting different exponential growth rates. Specifically, think of a breadth-first search in $\bM_0$ starting from a vertex $i$. If $j$ is very close to the root, there are many paths from $i$ to other vertices passing through $j$, making it more likely that $(i,j)$ will be included in each iteration. In contrast, if $j$ is far from the root (e.g., at the end of the exploration process), there are fewer such paths, making it less likely for $(i,j)$ to be included in each iteration. This accounts for the mixture-of-exponential shape, where the mixture primarily reflects the distance of each vertex from the root, the number of such vertices, and their corresponding exponential growth rates.

\paragraph{Qualitative description.}
\begin{figure}[ht]
\subfigure[\label{fig:Acc1} Linear regime]{\includegraphics[width=0.32\textwidth]{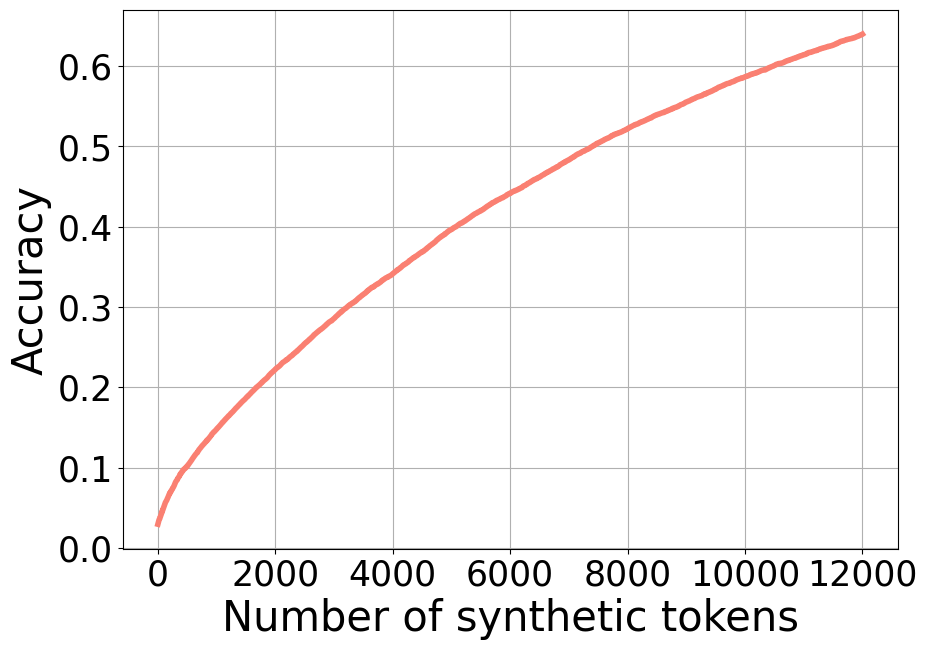}}
\subfigure[\label{fig:Acc2} Log-linear ($t$ in log scale)]{\includegraphics[width=0.32\textwidth]{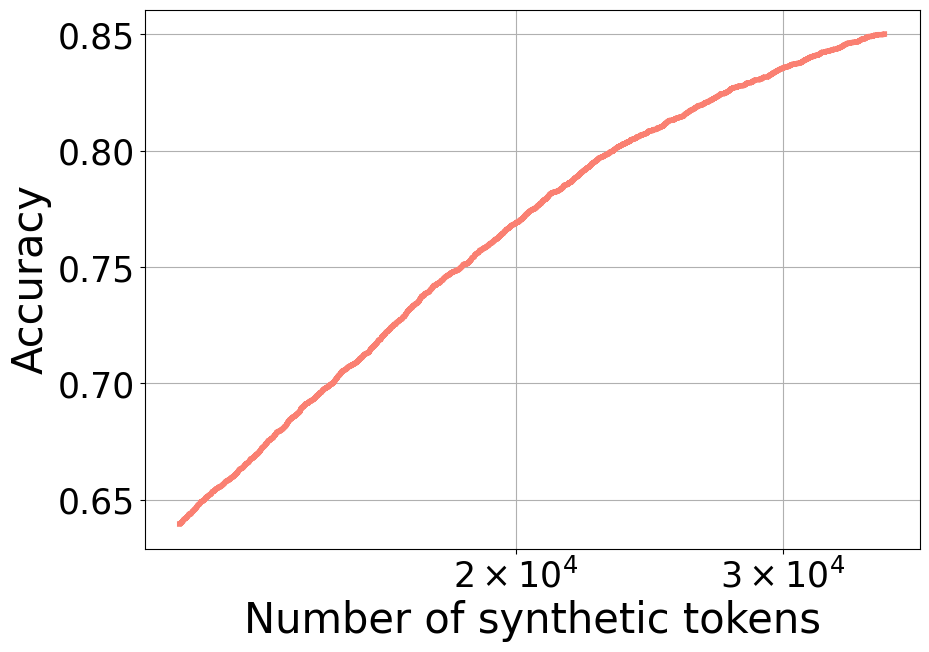}}
\subfigure[\label{fig:Acc3} Plateau regime]{\includegraphics[width=0.32\textwidth]{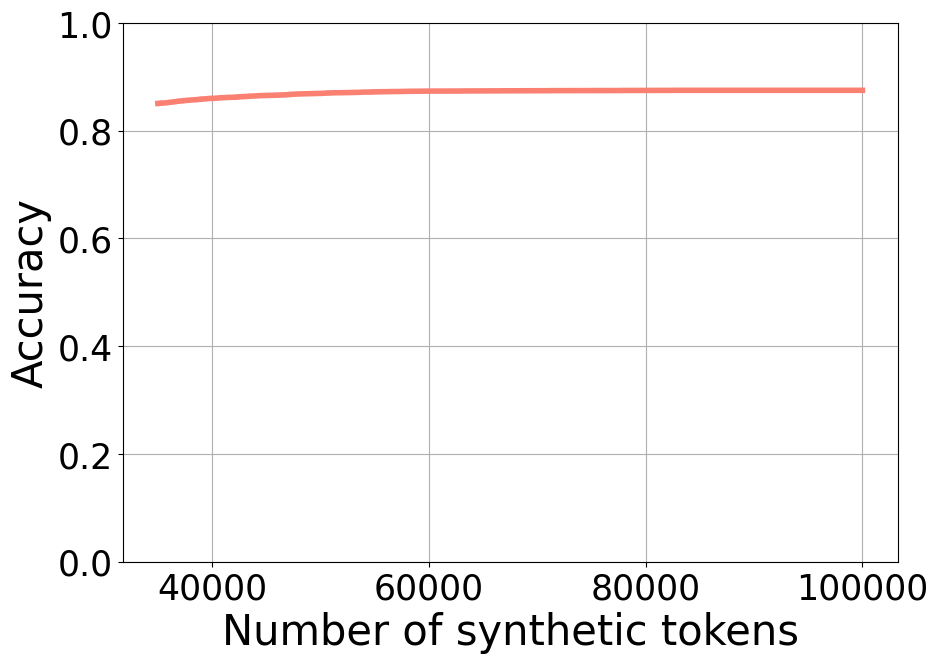}}
\caption{
Accuracy $\mathsf{Acc}(\bM_t)$ with respect to time $t$, for $V = 100$ and $p=0.03$. The mixture-of-exponential functional form in \eqref{eqn:moe} leads to three distinct regimes.}
\label{fig:Acc}
\end{figure}
Finally, to help build an intuitive understanding, we provide a qualitative description of the mixture-of-exponential shape.
We demonstrate in Appendix \ref{sec:appendix-proof} that this mixture-of-exponential shape comprises three distinct phases: a fast growth phase, a slower growth phase, and a plateau phase. Mathematically, we show the existence of two distinct times, $0 < t_1 < t_2$, such that
\begin{align*}
    \mathsf{Acc}(\bM_{T}) = 
    \begin{cases}
    \Theta\left (p+ t \right), \quad &\text{ for } 0\leq t\leq t_1,\\
    \Theta(\log t ), \quad &\text{ for } t_1\leq t\leq t_2,\\
    \Theta(1), \quad &\text{ for } t\geq t_2,
    \end{cases}
\end{align*}
where we use a convenient change of variable $T = t V(V-1)$.
It is important to note that the choice of $\log t$ in the second phase is not necessarily canonical.
In fact, the bound holds for any well-behaved monotone increasing concave function as a replacement for $\log t$.
Our representation here is motivated by two factors: first, it aligns with the performance observed in our EntiGraph CPT numerical results, and second, it reflects the gradual slowdown in growth. We illustrate the three phases in Figure \ref{fig:Acc}, which present a simulation of the toy model with $p = 0.03$.

\section{Discussion}

\subsection{Limitations}
Because EntiGraph synthesizes data using a prompted language model, there is a risk that it may hallucinate and fabricate non-existent relations among the entities.
Although our process of generating synthetic data is grounded by the source documents, it is an assumption that $\lmgen$ is capable enough to generate faithful synthetic data when conditioned on $\Ds$.
In our experiment with QuALITY books, we manually read a few books and fact-checked a subset of the synthetic data generated for those books; we did not find factually incorrect synthesized text.
We postulate that this is because we use a sufficiently strong prompted model $\lmgen$ (\texttt{gpt-4-turbo}).
If EntiGraph were applied to more challenging content like a complex research paper, it is possible that the prompted model could be more prone to hallucination.

On the other hand, since we use a very capable prompted language model \texttt{gpt-4-turbo} to generate synthetic data, one might be concerned that our performance gains come from distilling the prompted LM's knowledge.
The closed-book results indicate that distillation effects alone cannot explain the performance of our approach (as we exceed GPT-4's closed-book performance), but our approach does not yet enable bootstrapping, where we use a model to generate its own synthetic data for a small target domain.
We view this as exciting future work.

\subsection{Future directions}

\paragraph{Continued scaling beyond real data.}
The large but finite body of human-written text is rapidly being consumed.
\cite{villalobos2024run} predict that frontier language models will exhaust all public, human-generated text in 2028.
As we transition from a data-rich to a data-constrained regime \citep{kaplan2020scalinglawsneurallanguage, muennighoff2023scaling}, further scaling will require us to extract more knowledge from existing data.
We demonstrated that synthetic continued pretraining with EntiGraph effectively extracts more knowledge from small corpora, which could help us learn from proprietary datasets or tail knowledge that appears only once or twice on the internet.
It is an open question whether synthetic data generation methods like EntiGraph could improve data efficiency more generally on standard pretraining data and without relying upon a stronger prompted model.

\paragraph{Alternatives to long-context language models.}
Recent work handles long user queries (e.g., 1M-10M+ tokens) using efficient implementations of attention \citep{dao2022flashattention, liu2023ring, gemini} or alternative architectures that are sub-quadratic in the context length \citep{tay2022efficienttransformerssurvey, gu2022efficiently, gu2024mamba, sun2024learninglearntesttime}.
In settings where many queries share the same long prefix---e.g., a corporation's proprietary documents or other use cases with prompt caching \citep{anthropicpromptcache}---one could instead continue pretraining on the prefix to internalize its knowledge, and then perform standard quadratic attention on shorter queries.
This approach pays a fixed training cost to amortize the prefix's knowledge into the weights of a model, and then benefits from shorter context lengths \citep{gururangan2020dont, snell2022learningdistillingcontext}.
By adapting the continued pretraining paradigm from 10B-100B tokens to as little as 1.3M tokens, our synthetic continued pretraining approach could enable unsupervised learning of shared text prefixes at much smaller and more practical token counts.

\subsection{Conclusion}
Continued pretraining with next-token prediction is remarkably effective in teaching pretrained language models new knowledge, but to date has only been applied successfully in broad, data-rich domains with 10B-100B+ tokens.
We downscale continued pretraining to small, specialized corpora with $\sim$1M tokens using synthetic continued pretraining: converting a small corpus into a large synthetic one with diverse representations of knowledge, and continuing pretraining on it.

We instantiate this approach using EntiGraph, a knowledge graph--inspired synthetic data augmentation algorithm.
Synthetic continued pretraining with EntiGraph demonstrates consistent scaling in downstream closed-book QA performance up to a 455M token synthetic corpus, whereas baselines such as continued pretraining on the small corpus or synthetic paraphrases show no improvement or asymptote early.
Moreover, the acquired parametric knowledge composes with instruction tuning and retrieved non-parametric knowledge in an open-book setting.
Lastly, we present a simplified mathematical model of EntiGraph and derive a functional form for its scaling trend, which closely matches our empirical trend.
We hypothesize that EntiGraph's ``externalization'' of the synthetic data generation process to a combinatorial structure---in this case, a knowledge graph over entities---is a generally useful strategy in synthesizing highly diverse data and a promising object for future study.

\section{Acknowledgement}
Zitong Yang would like to thank Samy Jelassi for feedback on a preliminary version of this work, 
Ruiqi Zhong for discussion regarding context distillation work, Xiang Lisa Li for discussion about reversal curse work, and the participants of the statistics seminar at Stanford University for their insightful feedback about a preliminary version of this work.
We also thank the Tatsu Lab for constructive feedback and interesting discussions that have helped improve the paper.
Zitong Yang is supported by the Albion Walter Hewlett Stanford Graduate Fellowship.
Neil Band acknowledges funding from an NSF Graduate Research Fellowship and a Quad Fellowship.
This work was supported by gifts from Panasonic Research, the Google Research Scholar Program, and the Tianqiao and Chrissy Chen Institute, as well as the NSF grant IIS-2338866.
E.J.C. is supported by the Office of Naval Research grant N00014-20-1-2157, the National Science Foundation grant DMS-2032014, the Simons Foundation under award 814641.

\addtocontents{toc}{\protect\setcounter{tocdepth}{3}}
\clearpage
\bibliography{reference}
\bibliographystyle{iclr2025_conference}

\clearpage
\appendix
\section*{Codebase, dataset, and model weights}
We provide the codebase for reproducing all results discussed in the paper below:
\begin{qualitativeBox}
\begin{center}
    \small{
\url{https://github.com/ZitongYang/Synthetic_Continued_Pretraining.git}
}
\end{center}
\end{qualitativeBox}
We release the 455M EntiGraph corpus below:
\begin{qualitativeBox}
\begin{center}
    \small{
\url{https://huggingface.co/datasets/zitongyang/entigraph-quality-corpus}
}
\end{center}
\end{qualitativeBox}
We release the EntiGraph CPT model weights below:
\begin{qualitativeBox}
\begin{center}
    \small{
\url{https://huggingface.co/zitongyang/llama-3-8b-entigraph-quality}
}
\end{center}
\end{qualitativeBox}

\tableofcontents

\newpage
\section{Additional related work}
\label{sec:additional-related-work}
\paragraph{Synthetic data generation.} There is rich literature on using neural nets to generate synthetic data.
Many such approaches were originally developed for semi-supervised learning---self-training and pseudo-labeling methods improve models by iteratively training them on their own predictions \citep{scudder1965probability, lee2013pseudolabel, yalniz2019billionscalesemisupervisedlearningimage, berthelot2019mixmatchholisticapproachsemisupervised, xie2020selftraining}, and co-training uses two models to supervise each other \citep{blum1998combining, balcan2004cotraining}.
Before language models rose to prominence, few approaches attempted to synthesize inputs.
One exception is membership query synthesis, which explored the synthesis of inputs in a supervised learning context \citep{Angluin1988QueriesAC, schumann-rehbein-2019-active}.

Contemporary works employ co-training \citep{lang2022cotraining} and self-training to improve language model performance, often on mathematical reasoning tasks \citep{huang2023large, gulcehre2023reinforcedselftrainingrestlanguage, zhang2024restmctsllmselftrainingprocess}, or synthesize input-output pairs for instruction tuning, usually by conditioning on a curated seed set \citep{wang-etal-2023-self-instruct, honovich-etal-2023-unnatural, alpaca, peng2023instructiontuninggpt4, yuan2024selfrewardinglanguagemodels, li2024syntheticdataalmostscratch}.

\paragraph{Continual learning and pretraining.}
Continual learning is rooted in historical work on connectionist networks \citep{mccloskey1989catastrophic, ratcliff1990connectionist} and considers learning with tasks arriving in an online manner \citep{schlimmer1986case, grossberg2012studies}.
The main focus is on mitigating a neural net's ``catastrophic forgetting'' of previously encountered tasks \citep{robins1995catastrophic, goodfellow2015empiricalinvestigationcatastrophicforgetting, kemker2018measuring}.
Approaches include regularizing parameter updates to preserve important parameters \citep{nguyen2017variational, zenke2017continual, kirkpatrick2017overcoming}; dynamically modifying the architecture \citep{rusu2016progressive, golkar2019continual}; and recalling or replaying previous experiences \citep{rebuffi2017icarl, shin2017continual, lopez2017gradient}.
Modern works in continued pretraining (cf. \S\ref{sec:related-work}) effectively mitigate catastrophic forgetting by scaling parameter count \citep{ramasesh2022effect} and mixing in updates on pretraining data \citep{instruct_gpt}.

\section{Details on the QuALITY dataset}
\label{sec:appendix-quality}
We provide additional details on the QuALITY dataset below.
For each book, we execute entity extraction (Step 1, \S\ref{sec:entigraph-method}) and then analyze all pair-wise relations between entities and a subset of all triplet relations (Step 2, \ref{sec:entigraph-method}).
We provide summary statistics for the Raw and EntiGraph corpora in Figure \ref{fig:appendix-quality-detail}.

\begin{figure}[ht]
\subfigure[\label{fig:appendix-raw-token-count} Raw article tokens]{\includegraphics[width=0.3\textwidth]{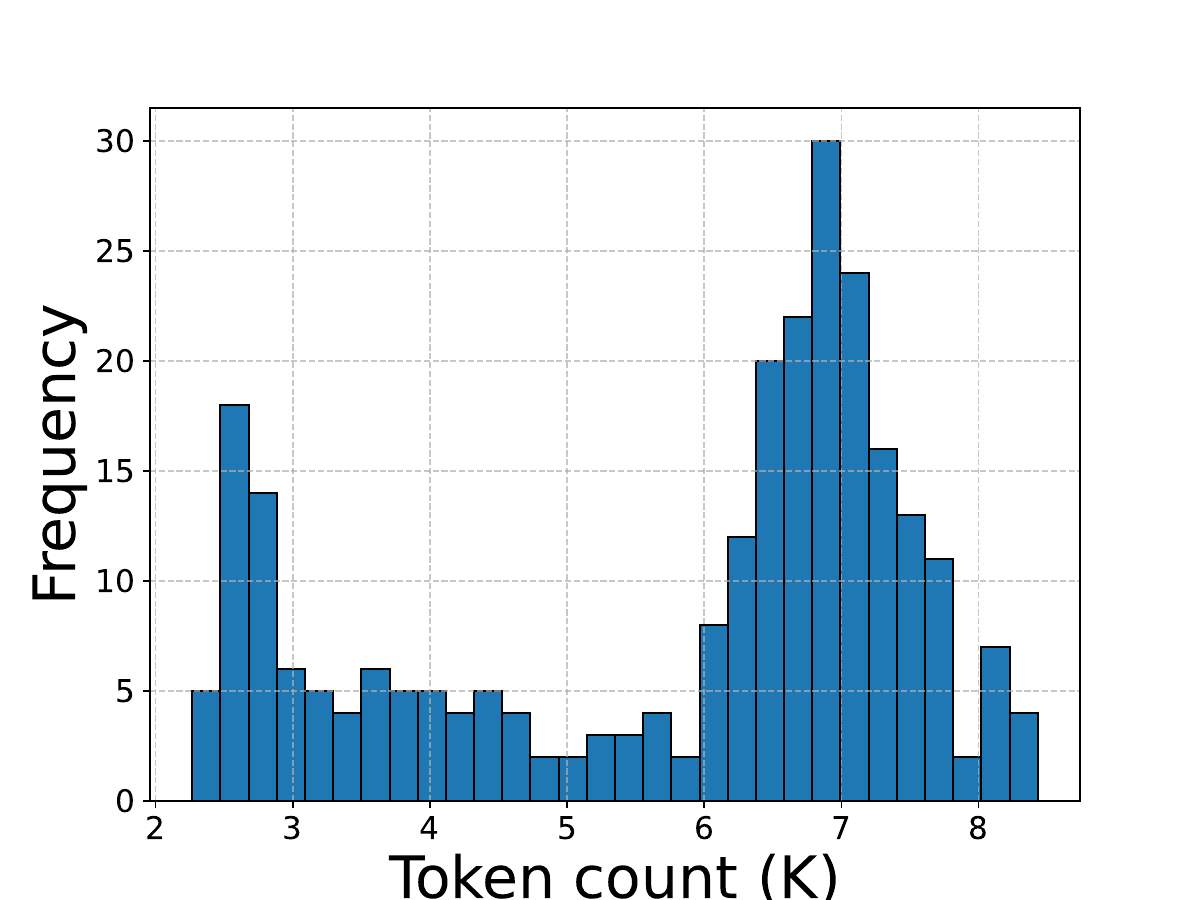}}
\subfigure[\label{fig:appendix-entity-count} Extracted entities]{\includegraphics[width=0.3\textwidth]{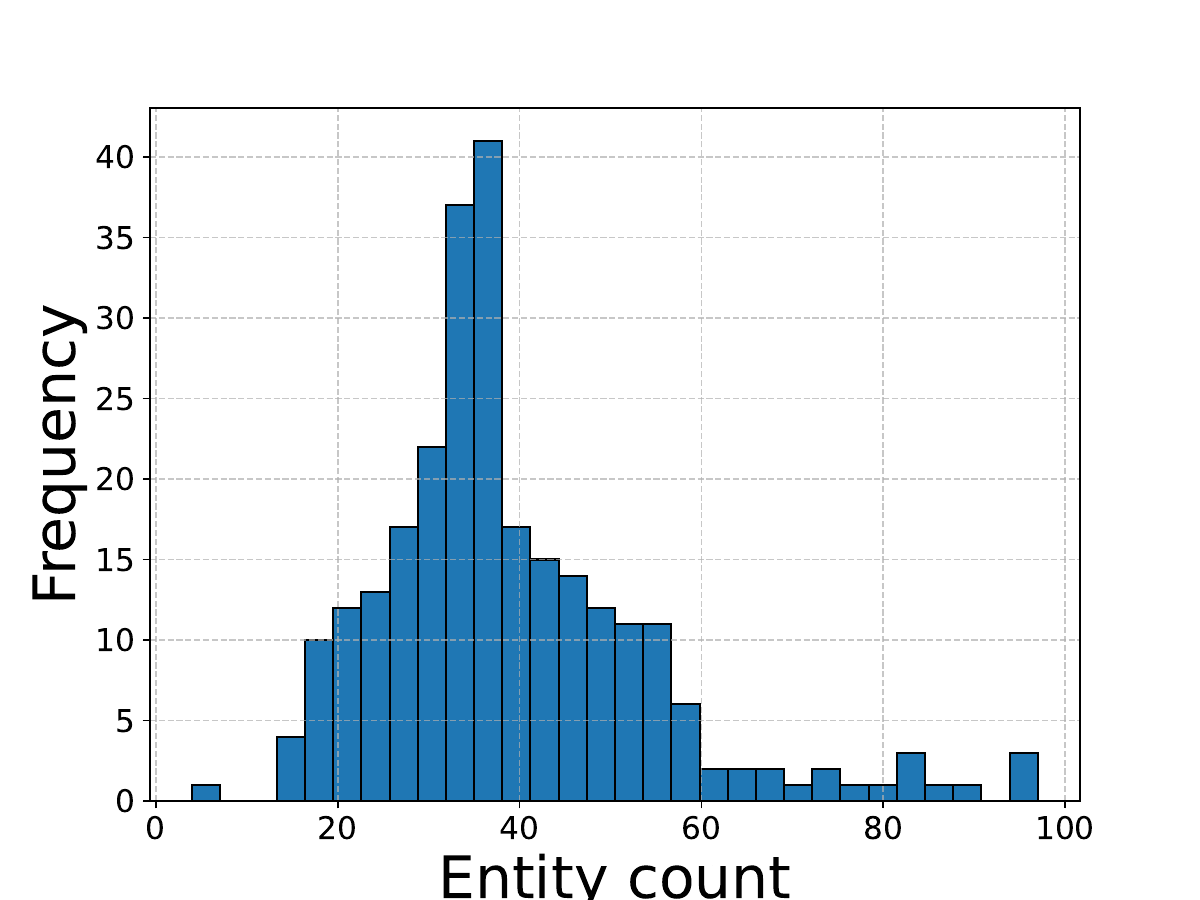}}
\subfigure[\label{fig:appendix-entigraph-token-count} EntiGraph corpus tokens]{\includegraphics[width=0.3\textwidth]{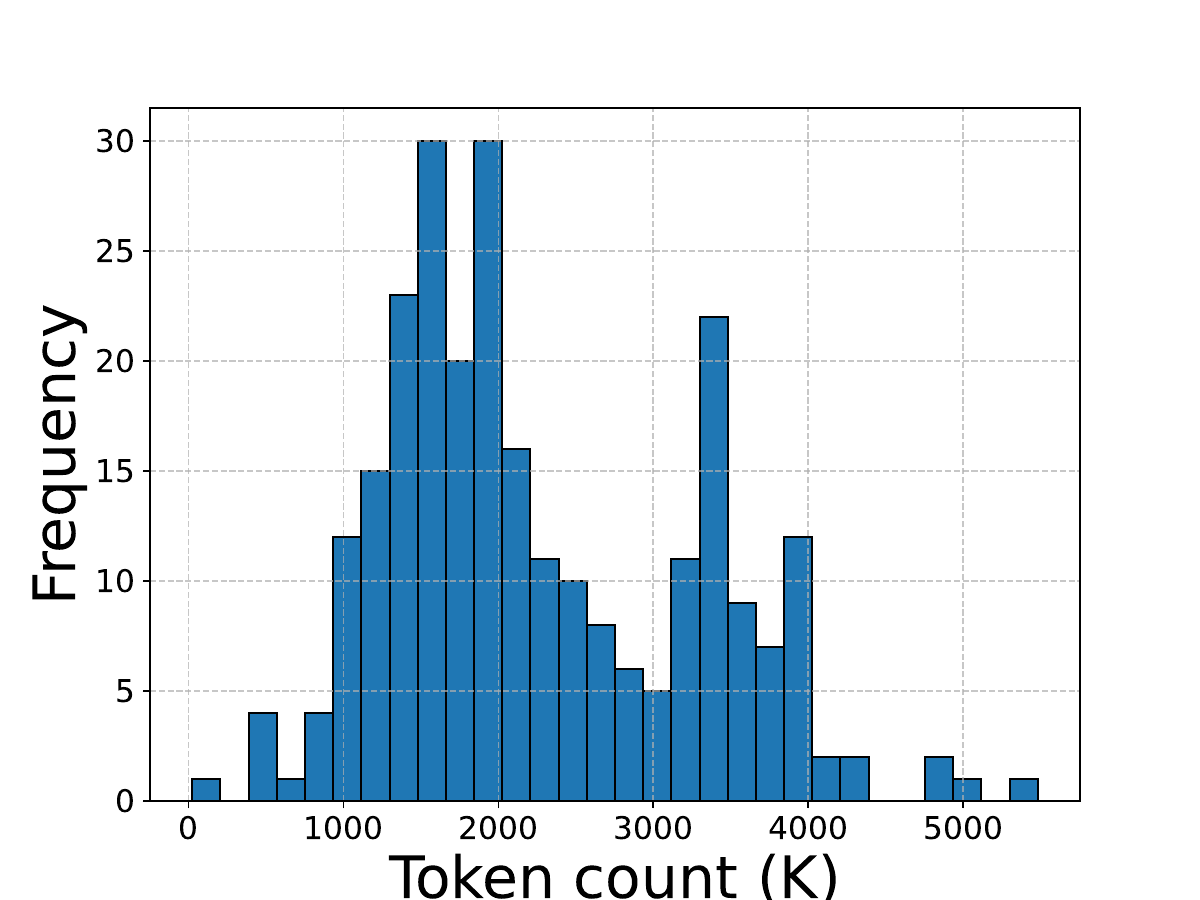}}
\caption{Histograms over the 265 QuALITY articles and books. (a) The token count of raw articles. (b) The number of extracted entities. (c) The token count of EntiGraph synthetic data (generated for each book).}
\label{fig:appendix-quality-detail}
\end{figure}

\section{Training details for the main experiments}
\label{sec:appendix-training-details}

\paragraph{Continued pretraining details.} In all experiments, we continue pretraining the Llama 3 8B Base model with a context length of 2048 and batch size of 16.
We apply a linear learning rate warmup for 5\% of total steps, followed by a cosine  decay with peak learning rate 5e-6.
We perform full parameter training with Fully Sharded Data Parallelism (FSDP, \cite{fsdp}).

\paragraph{EntiGraph continued pretraining details.}
To mitigate the forgetting of pretrained knowledge, we perform replay with a rate of 0.1 using 1B RedPajama tokens \citep{together2023redpajama}.
More precisely, for each training batch, we flip a biased coin such that with 10\% probability, we load the RedPajama data instead of the EntiGraph synthetic data.

\paragraph{Raw continued pretraining details.}
Next, we provide details for our continued pretraining directly on the Raw corpus, producing the ``Raw CPT'' model.
Because the Raw corpus only has 1.3M tokens, we jointly tune the number of epochs (repetition factor) and the RedPajama replay rate on accuracy over a \quality~QA validation split.
The selected hyperparameter configuration uses 4 epochs and a 0.1 replay rate.

\paragraph{Instruction tuning details.}
We use the UltraChat instruction tuning dataset \citep{ultrachat} filtered by the Huggingface team \citep{alignmentbook} as our instruction tuning data.
We use the chat template of Llama 3.1 8B Instruct \citep{llama3} to format the UltraChat conversations, obtaining a 250M token instruction tuning dataset.
We apply a linear learning rate warmup followed by a cosine decay to 0 with peak learning rate 5e-6, and train the model for 1 epoch with a batch size of 512 and context window of 2048.
To sanity check our instruction tuning procedure, we measure the AlpacaEval \citep{alpaca_eval} winrate against GPT-4 and find it improves from 0\% to 6.25\%, comparable to a 7.7\% baseline winrate of Llama 2 Chat 13B.

\paragraph{Compute resource.}
All the continued pretraining experiments are performed with one $8\times$H100 node.
With PyTorch FSDP \cite{fsdp}, we obtain throughput of 6090 tokens per second.
Since all experiments use the same model architecture,  batch size, and context length, the time to run the experiments can be calculated based on the total tokens seen during training.
For example, the main EntiGraph is trained on 455M tokens with 2 epochs.
Therefore, it should take $455$M$\times 2/6090$ seconds, which is about 41 hours.

\section{Task-specific finetuning for QuALITY Question set}
\label{sec:appendix-task-specific}

Our work considers \emph{task-agnostic} synthetic data generation and continued pretraining as a way to obtain generalizable knowledge about a domain, in a way that can later be extracted via few-shot prompting \citep{gpt3} and instruction tuning \citep{instruct_gpt}.

However, if our goal is only to do well on a single task, such as question answering, then we could fine-tune a language model for that particular task. This approach worked extremely well on tasks such as SQuAD \citep{rajpurkar2016squad100000questionsmachine} in-domain but suffered from degraded performance outside the fine-tuning data distribution \cite{awadalla-etal-2022-exploring}.

We do not extensively perform comparisons to task-specific finetuning due to the more general multi-task goals of EntiGraph, we run preliminary experiments comparing a simple QA SFT baseline to EntiGraph, and find that EntiGraph scaling and synthetic data generation costs are generally favorable even when compared to this strong, task-specific baseline.

\paragraph{QA SFT.} We follow the same set as in \S\ref{sec:setup} and \S\ref{sec:exp-setup} except that we do not prompt $\lmsynth$ to generate general knowledge about QuALTY articles.
Instead, we prompt $\lmsynth$ to generate QA pairs directly:

\begin{qualitativeBox}
{\footnotesize
\begin{verbatim}
You are an assistant to help read a article and then rephrase it in a
question answering format. The user will provide you with an article
with title, year, content. You need to generate a paraphrase of the
same article in question and answer format with multiple tags of
"Question: ..." followed by "Answer: ...". Remember to keep the
meaning and every content of the article intact, including the title,
year, etc.
\end{verbatim}
}
\end{qualitativeBox}
We repeat this prompt many times at temperature 1.0, resulting in 28M tokens on synthetic question answer pairs.
We perform the same continued pretraining procedure in \S\ref{sec:exp-cpt-procedure} on Llama 3 8B and refer to this model as ``QA SFT''.

\begin{figure}[ht]
\centering
\includegraphics[width=0.7\textwidth]{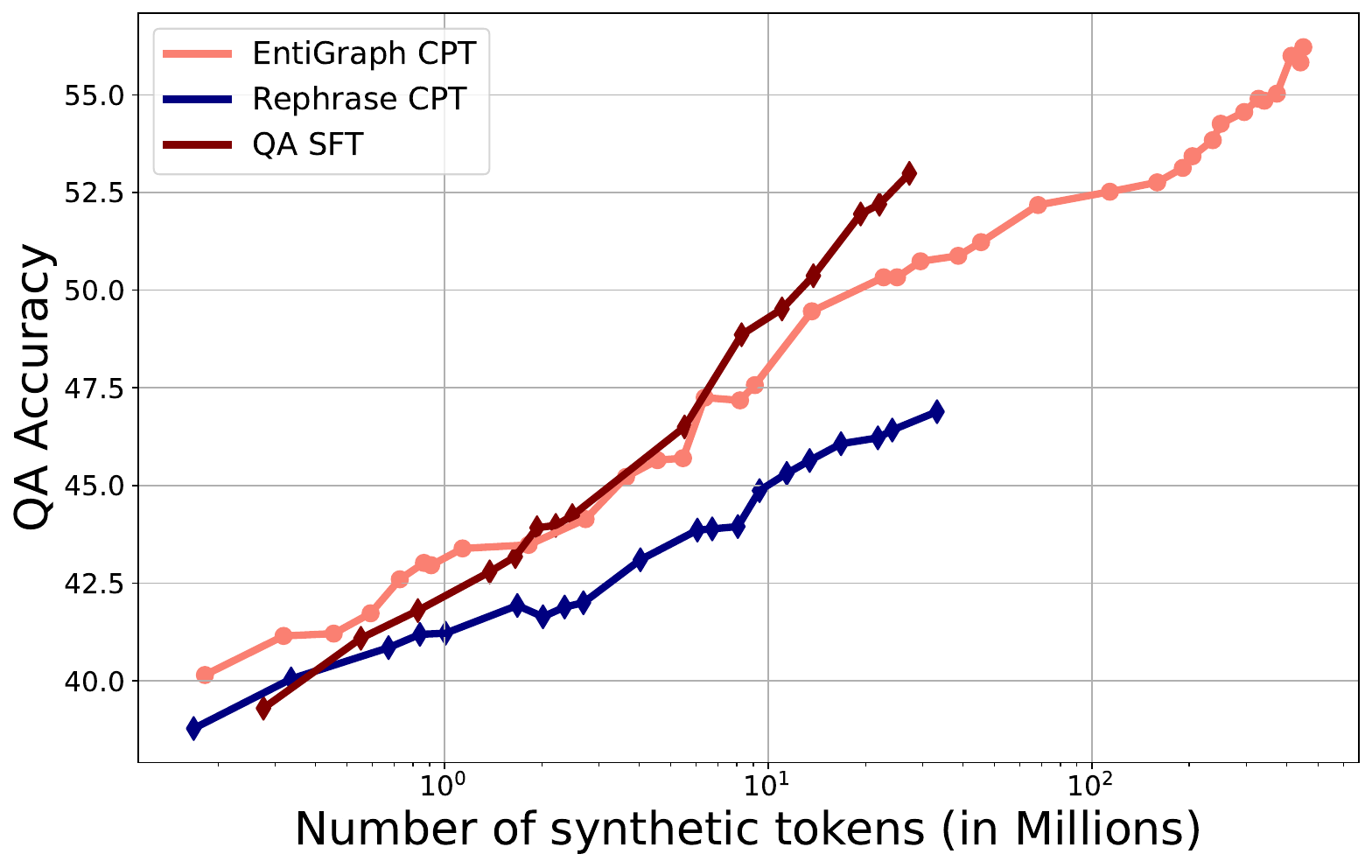}
\caption{Accuracy on the QuALITY question set $\Qt$ ($y$-axis) as a function of the synthetic token count ($x$-axis).
Comparison among EntiGraph CPT, Rephrase CPT, and QA SFT.}
\label{fig:qasft}
\end{figure}

\paragraph{Results discussion} 
We plot the QA SFT scaling curve in Figure \ref{fig:qasft}.
We can see that task-specific finetuning demonstrates a very sharp improvement in QA accuracy, consistent with prior results showing task-specific finetuning gains for pretrained models. While QA SFT performance is high, we note that EntiGraph attains similar performance despite being entirely task-agnostic, and the overall dollar cost of creating the dataset is much lower for EntiGraph.

This difference in synthetic data generation cost is hidden in Figure \ref{fig:qasft}, as we plot the number of training tokens rather than dollars spent to generate the synthetic data. For QA SFT, each QA question is generally short, resulting in large inefficiencies in generating this QA dataset. We found that the input token to output token ratio was large compared with Rephrase CPT and EntiGraph CPT, resulting in over ~\$5k to generate just 28M tokens \footnote{OpenAI API pricing, Sep 2024}. This difference in cost means that further scaling became prohibitively expensive, and that EntiGraphs's performance in Figure \ref{fig:qasft} is even better than it appears, if we match for total cost rather than token budget.

\section{Additional details on open-book experiments}
\label{sec:appendix-rag}

We provide additional details on our open-book experimental setup below, including our retrieval-augmented generation (RAG, \cite{rag, ragsurvey}) pipeline.
As mentioned in \S\ref{sec:exp-open-book}, we use a standard two-stage RAG pipeline: first, an offline stage which indexes document chunks; second, inference-time retrieval, reranking, and placement of those chunks in a few-shot LM prompt.

\subsection{Stage 1: offline indexing}

The purpose of the indexing stage is to construct an index over all the 265 articles and books from the \quality~corpus $\Ds$.
More specifically, this stage chunks documents from the given corpus, obtains dense vector embeddings for each chunk using an API-based embedding model, and indexes the (embedding, chunk) pairs.

\paragraph{Chunking documents.}
We first split each document $D^{(i)} \in \{D^{(i)}\}_{i = 1}^n = \Ds$ into a set of $m_i$ document chunks $\{C^{(i)}_1, ..., C^{(i)}_{m_i}\}$.
To perform this splitting, we use the \texttt{Recursive}
\texttt{CharacterTextSplitter} from \cite{chase2022langchain}, which attempts to keep all paragraphs (and then sentences, and then words) together for as long as possible, in order to preserve the semantics within each chunk.
We use non-overlapping chunks and tune chunk size in characters (\texttt{chunk\_size}, hyperparameter values provided below).
Lastly, because we have access to metadata about each document $D^{(i)}$---namely, the title, author, and year of the book or article---we prepend this metadata to each document chunk.
This is analogous to how a corporation building a RAG system over their own document store could include metadata about the document (title, author, year, etc.).
These final chunks with metadata prepended are embedded, and are the ones that are retrieved and placed in-context.

\paragraph{Embedding and indexing document chunks.}
Next, we obtain dense embeddings for all document chunks using a state-of-the-art text embedding model OpenAI \texttt{text-embedding}
\texttt{-3-large} \citep{neelakantan2022textcodeembeddingscontrastive}. 
Lastly, we index all (embedding, chunk) tuples using a FAISS vector store \citep{douze2024faisslibrary}.

\subsection{Stage 2: inference-time retrieval and reranking}

At inference time, the RAG system receives a test query $q \in \Qt$. 
Each query $q$ is contextualized with the article title and author name, as described in \S\ref{sec:exp-setup}, and contains its four possible answer choices (\quality~is a 4-choice, multiple choice dataset).
In Stage 2, we embed the query with the API-based embedding model, retrieve $K$ document chunks using an approximate nearest-neighbor search, and lastly, select the $k < K$ most relevant chunks using an API-based reranker.

\paragraph{Retrieving top-$K$ document chunks.} 
We embed $q$ with \texttt{text-embedding-3-large}, and retrieve the top-$K$ most relevant document chunks from our indexed vector store using FAISS similarity search with a Euclidean distance metric.

\paragraph{Reranking to obtain top-$k$ ($k < K$) chunks.}
Next, we use a reranker to filter the $K$ retrieved document chunks to a smaller number of reranked chunks $k$.
Rerankers are known to significantly improve recall (the proportion of the time that the salient article is contained in the top chunks), and indeed, the recall of our RAG pipelines is near-perfect (Table~\ref{tbl:exp-open} in \S\ref{sec:exp-open-book}).
Specifically, we pass the query $q$ and the list of $K$ retrieved document chunks to a state-of-the-art reranker---Cohere \texttt{rerank-english-v3.0} \citep{coherererank}---which returns a list of the $K$ chunks in order from most to least semantically relevant for the query.
We take the $k$ highest scoring chunks and place them in our few-shot prompt.

\paragraph{Few-shot prompt formatting.}
Our full few-shot chain-of-thought evaluation prompts for the open-book setting are provided in the codebase.
Similar to the closed-book QA evaluation prompt, we manually write and fact-check in-context learning examples about well-known books, to avoid leaking knowledge from the  \quality~articles.
In early experiments, we found that placing the retrieved contexts first, followed by the question and answer choices after, significantly improved performance compared to question-then-contexts; we use this format throughout the retrieval experiments.
We treat as a hyperparameter whether the reranked chunks are ordered from the best match to worst (\texttt{best\_first}) or from the worst match to best (\texttt{best\_last}).
When performing few-shot evaluation, we follow the sampling procedure used in the closed-book experiments (Appendix \ref{sec:appendix-qa-eval-detail}).
Specifically, we generate 64 responses for each question, and filter out responses that do not parse to one of the four choices.
Lastly, we randomly select one of the valid responses as the model's final answer.

\subsection{Hyperparameter tuning}
In our experiments, we compare two LMs used in the RAG pipeline above: EntiGraph CPT and its base model, Llama 3 8B Base.
As mentioned above, we fix the retrieved number of chunks to $K = 128$, but vary the number of reranked chunks $k$ which are ultimately placed in the context window.
For each language model + RAG pipeline, we independently tune the following hyperparameters with a grid search on accuracy using a \quality~QA validation split:
\begin{itemize}
    \item Document $\texttt{chunk\_size} \in \{256, 512, 1024\}$
    \item Rerank top-$k \in \{1, 2, 4, 8, 16\}$
    \item Order of chunks $\in \{\texttt{best\_first}, \texttt{best\_last}\}$
    \item Eval temperature $\in \{0.1, 0.3, 0.5, 0.7\}$
\end{itemize}
We refer the reader to our codebase for tuned hyperparameters.

\newpage
\section{Proof of Theorem \ref{thm:toy} and other analytical formulas}
\label{sec:appendix-proof}
In this section, we prove Theorem \ref{thm:toy} and provide the derivations for several other approximation formulas.

\begin{proof}[Proof of Theorem \ref{thm:toy}]
    Fix the matrix $\bM_0$, we observe that
    \begin{align*}
        \mathsf{Acc}(\bM_t) = \frac{\mathbb{E}[\|\bM_t\|_1 \vert \bM_0 ]}{V(V-1)} = \sum_{(i,j) \in \cV^2} \frac{\mathbb{E}[\mathbbm{1}((i,j)\in \cD_t) \vert \bM_0 ]}{V(V-1)} = \sum_{(i,j) \in \cV^2} \frac{\mathbb{P}[(i,j)\in \cD_t \vert \bM_0 ]}{V(V-1)}.
    \end{align*}
   For each $(i,j) \in \cV^2$, we define $q_{i,j}$ to be the probability that $(i,j)$ is included in the set $\{(x_t, z_t^1), (x_t, z_t^2), \dots, (x_t, z_t^{k_t}), (x_t, y_t)\}$. Note that each iteration of the procedure generates a path $(x_t, z_t^1, z_t^2, \dots, z_t^{k_t}, y_t)$ independently identically. So naturally $q_{i,j}$ does not depend on the time $t$. This implies that $\mathbb{P}[(i,j)\in \cD_t \vert \bM_0] = 1-(1-q_{i,j})^t$. Thus we can further rewrite the link density as 
   \begin{align*}
       \mathsf{Acc}(\bM_t) &= \frac{|\Ds|}{V(V-1)}+ \sum_{(i,j) \in \cV^2 \backslash \Ds} \frac{\mathbb{P}[(i,j)\in \cD_t \vert \bM_0 ]}{V(V-1)} \\
       &= \frac{|\Ds|}{V(V-1)}+ \sum_{(i,j) \in \cV^2 \backslash \Ds} \frac{1-(1-q_{i,j})^t}{V(V-1)}.
   \end{align*}
   The remaining task is to estimate $q_{i,j}$. We say a vertex $j$ is reachable from $i$ and denote $i \sim j$, if there is a directed path from $i$ to $j$ in $\bM_0$. We define $\cR = \{(u,v) \in \cV^2: u \neq v, u \sim v\}$ to be the set of all reachable pairs of vertices in $\cV$. We note that $q_{i,j}$ is non-zero if and only if $j$ is reachable from $i$ in $\bM_0$. Now, for any $t\geq 1$, the function $1-(1-x)^t$ is concave, thus by Jensen's inequality, we have
   \begin{align*}
       \sum_{(i,j) \in \cV^2 \backslash \Ds} 1-(1-q_{i,j})^t \leq \sum_{(i,j) \in \cR} 1-(1-q_{i,j})^t \leq |\cR|\left(1-(1-\bar q_{i,j} )^t \right),
   \end{align*}
   where
   \begin{align*}
       \bar q_{i,j} = \frac{\sum_{(i,j) \in \cR} q_{i,j}}{|\cR|}.
   \end{align*}
    For each $(i,j)\in \cR$, the probability $q_{i,j}$ satisfies
    \begin{align*}
        q_{i,j} = \frac{\sum_{a\neq b \in\cV^2 } \mathbbm{1}((i,j)\in \{(a, z^1), (a, z^2), \dots, (a, z^{k}), (a, b)\}) }{V(V-1)}
    \end{align*}
    where $(a, z^1, z^1, \cdots, z^k, b)$ is the shortest path in $\bM_0$ connecting $a$ and $b$. If there is no such path, then by default the indicator equals zero. Now we look at
    \begin{align*}
        \sum_{(i,j) \in \cR} q_{i,j} &= \frac{1}{V(V-1)}\sum_{(i,j) \in \cR} \sum_{(a,b) \in \cR} \mathbbm{1}((i,j)\in \{(a, z^1), (a, z^2), \dots, (a, z^{k}), (a, b)\})\\
        &\leq \frac{1}{V(V-1)}\sum_{(a,b) \in \cR} \sum_{i\neq j \in\cV^2 } \mathbbm{1}((i,j)\in \{(a, z^1), (a, z^2), \dots, (a, z^{k}), (a, b)\})\\
        & = \frac{1}{V(V-1)}\sum_{(a,b) \in \cR} \ell_{a,b},
    \end{align*}
    where $\ell_{a,b}$ is the length of the shortest path connecting $a$ to $b$. To analyze the typical shortest length of paths, we present a few classical results on directed Erd\H{o}s-R\'enyi graphs. For any $a \in \cV$, let $X(a)$ denote the set of vertices reachable from $a$ and let $Y(a)$ denote the set of vertices from which $a$ is reachable. Recall that $\rho(\lambda)$ is the extinction probability for the Poisson$(\lambda)$ branching process.
    \begin{lemma}[Lemma 1 and Corollary 1 in \cite{karp1990transitive}]\label{lem:conponent_size}
        For each vertex $a$, with probability tending to $1$ as $V$ tends to infinity, there exists a constant $\beta>0$ such that either $|X(a)| \leq \beta \log V$ or $|X(a)| = (1-\rho(\lambda)) V + \Theta(\sqrt{V})$. Moreover, the probability that the latter happens tends to $1-\rho(\lambda)$ as $V$ tends to infinity. The same is true for $Y(a)$.
    \end{lemma}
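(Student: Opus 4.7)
The plan is to exploit the classical coupling between the breadth-first exploration of $X(a)$ in the directed Erdős-Rényi graph $D(V,p)$ with $p=\lambda/V$ and a Galton-Watson branching process with Poisson$(\lambda)$ offspring. Concretely, when BFS explores a vertex $v$, its out-neighbors among the currently unexplored vertices form a Binomial$(V-n_{\mathrm{explored}}, p)$ set, which is total-variation close to Poisson$(\lambda)$ as long as $n_{\mathrm{explored}}=o(V)$. I would formalize this by coupling the first $\Theta(\sqrt{V})$ BFS generations to a pure Poisson$(\lambda)$ branching process $Z$, with error bounded via standard Binomial--Poisson approximations.

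First I would establish the dichotomy and the asymptotic probability. Since $\lambda>1$, the branching process $Z$ is supercritical and survives with probability $1-\rho(\lambda)$, where $\rho(\lambda)$ is the smallest solution in $[0,1]$ of $\rho=\exp(\lambda(\rho-1))$. Conditioned on extinction, the total progeny of $Z$ has exponentially decaying tails, so with high probability the BFS explores at most $\beta \log V$ vertices for a constant $\beta=\beta(\lambda)$; combined with the coupling error this gives the ``$|X(a)|\le \beta\log V$'' branch and shows it occurs with limiting probability $\rho(\lambda)$.

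Next I would handle the survival case by a two-phase ``sprinkling'' argument. In Phase I, run BFS until either extinction occurs or the discovered set reaches size $V^{2/3}$; the branching-process coupling shows that, conditional on non-extinction, this happens after $O(\log V)$ generations with high probability. In Phase II, once a macroscopic frontier is available, I would use a Doob martingale / Azuma-Hoeffding bound on the edge indicators not yet revealed to show that the frontier expands in a nearly deterministic way until the explored set covers a $(1-\rho(\lambda))$ fraction of vertices, yielding $|X(a)|=(1-\rho(\lambda))V(1+o(1))$ w.h.p.\ on survival. To upgrade the $o(V)$ error to the claimed $\Theta(\sqrt{V})$ rate I would apply a martingale CLT to the same Doob martingale: each of the $\Theta(V^2)$ edge indicators contributes an $O(1/V)$ bounded difference, so their aggregated fluctuation is Gaussian on the $\sqrt{V}$ scale. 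The statement for $Y(a)$ then follows by reversing all edges, which preserves the law of $D(V,p)$.

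The main obstacle I anticipate is ruling out intermediate sizes in $(\beta\log V,\,(1-\rho(\lambda))V)$ and pinning down the fluctuation scale. The branching-process coupling cleanly handles exploration while the discovered set is $o(V)$, but it breaks down once $\Theta(V)$ vertices are discovered, exactly where the residual graph becomes dependent on the history of exploration. Bridging these two regimes---showing that any exploration that escapes the extinction window necessarily saturates the giant component---requires care in controlling the depletion of the unexplored pool, and obtaining the precise $\Theta(\sqrt{V})$ (not merely $o(V)$) rate additionally demands a CLT argument whose variance computation must track the subtle dependence introduced by the extinction/survival split.
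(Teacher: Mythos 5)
The paper does not supply a proof of this statement at all: it is quoted verbatim as Lemma~1 and Corollary~1 of \cite{karp1990transitive}, so there is no internal argument to compare your proposal against. That said, your sketch is the standard route by which such directed Erd\H{o}s--R\'enyi component-size results are proved (BFS coupling to a Poisson$(\lambda)$ branching process on the $o(V)$ scale, then a sprinkling/concentration phase once the frontier is macroscopic), and that general outline is sound.

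There is, however, a concrete gap in the step where you upgrade the error to $\Theta(\sqrt V)$. You propose a Doob martingale over the $\Theta(V^2)$ edge indicators with bounded differences $O(1/V)$, and conclude $\sqrt V$ fluctuations. Two problems: first, with that counting Azuma would give a variance proxy of $\Theta(V^2)\cdot O(1/V)^2 = O(1)$, i.e.\ an $O(1)$ fluctuation, which is the wrong scale; second, the differences are not $O(1/V)$, since toggling a single edge that happens to be a bottleneck in the BFS tree can change $|X(a)|$ by $\Theta(V)$, so the edge-by-edge bounded-differences hypothesis fails outright. The correct martingale is the BFS exposure filtration (one step per explored vertex, $\Theta(V)$ steps, increments with $O(1)$ second moment), which does yield Gaussian fluctuations on the $\sqrt V$ scale; this is essentially the Pittel--Wormald-type argument for the undirected giant component, adapted to the out-cluster. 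I would also note that for the purposes of this paper the $\Theta(\sqrt V)$ precision is never actually used downstream --- only $|X(a)| = (1-\rho(\lambda))V(1\pm\varepsilon)$ --- so the more elementary $o(V)$ error from your Phase~II Azuma argument would suffice for the application, and you could defer the CLT refinement to the cited Karp reference.
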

    For each vertex $a$, the set $X(a)$ is said to be small if $|X(a)| \leq \beta \log V$ (in such case we write $a\in \cS_X$) and large if $|X(a)| = (1-\rho(\lambda)) V + \Theta(\sqrt{V})$ (we write $a\in \cL_X$). We define $\cS_Y$ and $\cL_Y$ similarly.
    \begin{lemma}[Theorem 3 in \cite{karp1990transitive} and Theorem 2.4.1 in \cite{durrett2010random}]\label{lem:diameter}
        With probability tending to $1$, the following statement holds for all $a$ and $b$ in $\cV$: if $X(a)$ is large and $Y(b)$ is large, then $b$ is reachable from $a$. Moreover, if $X(a)$ is large and $Y(b)$ is large, then for any $\varepsilon>0$ and any sufficiently small $\delta>0$,
        \begin{align*}
            \mathbb{P}[\ell_{a,b} > (1+\varepsilon) \log V /\log \lambda ] < \exp(-V^{\varepsilon} \delta).
        \end{align*}
    \end{lemma}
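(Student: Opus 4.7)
The plan is to prove both statements via a coupling between the BFS exploration of neighborhoods in the directed Erdős-Rényi graph with $p = \lambda/V$ and a Poisson$(\lambda)$ Galton-Watson branching process. When exploring the out-neighborhood $X(a)$ level by level, each currently unexplored vertex is revealed to be a child of the current frontier with probability $p$; while the explored set has size $o(V)$, the per-vertex number of children is approximately Binomial$(V, \lambda/V) \approx$ Poisson$(\lambda)$, so on survival (probability $1-\rho(\lambda)$) the depth-$\ell$ population grows like $\lambda^\ell$. An analogous coupling in the reversed graph controls $Y(b)$, and the ``large'' event of Lemma~\ref{lem:conponent_size} corresponds exactly to the survival event of this branching process.

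For the reachability claim, I would use a sprinkling argument. Partition each potential edge's existence indicator so that a fraction $1-\eta$ of the randomness is used to construct $X(a)$ and $Y(b)$, and a fresh fraction $\eta$ remains for edges between them. Conditional on the ``large'' events (so that $|X(a)|, |Y(b)| \geq c V$ for some $c > 0$), the number of fresh edges from $X(a)$ to $Y(b)$ stochastically dominates a Binomial$(c^2 V^2, \eta \lambda/V)$ with mean $c^2 \eta \lambda V \to \infty$. A Chernoff bound gives at least one such edge $(u,v)$ except on an event of probability $\exp(-\Omega(V))$; concatenating the $X(a)$-path $a \to u$, the edge $(u,v)$, and the $Y(b)$-path $v \to b$ yields a directed path from $a$ to $b$. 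A union bound over the $V^2$ pairs then gives the ``for all $a, b$'' statement with probability tending to $1$, since the per-pair failure probability decays exponentially in $V$.

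For the diameter bound on a fixed pair $(a,b)$, I would run two-sided BFS to depth $r = (1+\varepsilon') \log V / (2 \log \lambda)$ with $\varepsilon' < \varepsilon$. By the branching-process coupling and standard supercritical growth, on survival the depth-$r$ neighborhood satisfies $|X_r(a)| \geq c' \lambda^r = c' V^{(1+\varepsilon')/2}$ with bounded-below probability, and similarly for $Y_r(b)$ in the reversed graph; at this depth the BFS has explored only $O(V^{(1+\varepsilon')/2}) = o(V)$ vertices, so the branching approximation remains accurate. Sprinkling one additional edge between $X_r(a)$ and $Y_r(b)$ succeeds with probability at least $1 - \exp(-\eta \lambda V^{\varepsilon'})$, and concatenation yields $\ell_{a,b} \leq 2r + 1 \leq (1+\varepsilon) \log V / \log \lambda$ with the claimed exponential tail; $\delta$ can be chosen as any positive constant smaller than $\eta \lambda$.

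The main obstacle is making the branching-process coupling precise while keeping the edge randomness disjoint across the three sub-explorations (forward BFS from $a$, backward BFS from $b$, and the final connecting edge). Once BFS explores $\omega(\sqrt V)$ vertices, depletion of the unexplored pool breaks the Poisson approximation, which is why stopping at depth $r \approx \tfrac12 \log V / \log \lambda$ (so that $|X_r(a)|$ stays $o(V)$) is the natural choice, with the final leap to $b$ handled by a fresh sprinkled edge. Achieving uniformity over all $V^2$ pairs in the first claim requires a per-pair failure probability of order $o(V^{-2})$, which the $\exp(-\Omega(V))$ sprinkling bound comfortably supplies.
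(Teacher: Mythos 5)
The paper does not actually prove this lemma: it is stated as a citation to Karp (1990, Theorem~3) and Durrett (2010, Theorem~2.4.1), and used as a black box in the proof of Theorem~\ref{thm:toy}. So there is no in-paper argument to compare against. That said, your sketch follows the standard route (BFS exploration coupled with a Poisson$(\lambda)$ branching process, plus a sprinkling step), which is indeed the idea behind both cited proofs, and you correctly flag the need to keep the edge randomness disjoint across sub-explorations.

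Two places in the sketch are more than bookkeeping and would need real repair. First, for the ``all pairs reachable'' claim, you condition on $|X(a)|, |Y(b)| \geq cV$ but build those sets in the thinned graph $G_1$ (edges kept with probability $(1-\eta)p$) so that a fresh $\eta p$ fraction remains for sprinkling. This proves the statement for pairs that are large \emph{in $G_1$}. The lemma is about pairs that are large \emph{in the full graph $G$}, and the inclusion only goes one way (large in $G_1$ implies large in $G$, not conversely). You need an extra step showing that, with high probability, a vertex whose $X(a)$ is large in $G$ also has $X_1(a)$ large in $G_1$ (or argue via the linear-size ``core'' that every large-$X$ vertex can reach, and which every large-$Y$ vertex is reachable from, sidestepping sprinkling entirely as Karp does). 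Second, for the diameter bound, ``on survival the depth-$r$ neighborhood satisfies $|X_r(a)| \geq c'\lambda^r$ with bounded-below probability'' is too weak to conclude $\mathbb{P}[\ell_{a,b} > (1+\varepsilon)\log V/\log\lambda] < \exp(-V^\varepsilon\delta)$. Conditionally on survival, the martingale limit $W = \lim Z_n/\lambda^n$ can be arbitrarily close to $0$ with non-negligible probability, so a fixed-depth exploration does not reach size $V^{(1+\varepsilon')/2}$ with the required exponentially small failure probability. The standard fix is to explore to a \emph{size} threshold rather than a depth threshold, then bound the depth needed: once the frontier has $\Omega(V^{c})$ vertices for some small $c>0$, Chernoff for sums of Poissons gives multiplicative growth by $\lambda(1\pm o(1))$ per level with failure probability $\exp(-\Omega(V^{c}))$, and a union bound over the $O(\log V)$ levels yields the sharp tail. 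Your constant $\varepsilon' < \varepsilon$ also makes the sprinkling mean only $\Theta(V^{\varepsilon'})$, so the resulting bound is $\exp(-\Omega(V^{\varepsilon'}))$ rather than $\exp(-V^\varepsilon\delta)$; you should take $r$ so that $2r+1 \leq (1+\varepsilon)\log V/\log\lambda$ with $\lambda^r$ as close to $V^{(1+\varepsilon)/2}$ as that constraint allows.
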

    With Lemma \ref{lem:conponent_size} and Lemma \ref{lem:diameter}, we can now give useful estimates of $|\cR|$.
    In particular, for any $\varepsilon>0$,
    \begin{align*}
        |\cR| &= |\{(a,b) \in \cR: a \in \cL_X, b\in \cL_Y\}| + |\{(a,b) \in \cR: a \in \cS_X \text{ or } b\in \cS_Y\}|\\
        & \leq (1-\rho(\lambda))^2 (1+\varepsilon/4) V^2 + 2(1+\varepsilon) V \beta \log V \\
        & \leq (1-\rho(\lambda))^2 (1+\varepsilon/3) V(V-1),
    \end{align*}
    with high probability. Similarly, for the lower bound,
    \begin{align*}
        |\cR| &= |\{(a,b) \in \cR: a \in \cL_X, b\in \cL_Y\}| + |\{(a,b) \in \cR: a \in \cS_X \text{ or } b\in \cS_Y\}|\\
        & \geq (1-\rho(\lambda))^2 (1-\varepsilon) V^2 \\
        & \geq (1-\rho(\lambda))^2 (1-\varepsilon) V(V-1),
    \end{align*}
    with high probability. By a union bound over all pairs of $(a,b)\in \cR$, we also have that
    \begin{align*}
        \sum_{(i,j)\in \cR} q_{i,j} &\leq \frac{1}{V(V-1)}\sum_{(a,b) \in \cR} \ell_{a,b} \\
        &= \frac{1}{V(V-1)}\sum_{\substack{(a,b) \in \cR\\ a\in \cL_X, b\in \cL_Y}} \ell_{a,b} + \frac{1}{V(V-1)}\sum_{\substack{(a,b) \in \cR\\ a\in \cS_X \text{ or } b\in \cS_Y}} \ell_{a,b}\\
        & \leq (1-\rho(\lambda))^2 (1+\varepsilon/2) \frac{\log V}{\log \lambda} + \frac{1}{V(V-1)} 2(1+\varepsilon) V (\beta \log V)^2\\
        & \leq (1-\rho(\lambda))^2 (1+\varepsilon) \frac{\log V}{\log \lambda},
    \end{align*}
    with probability larger than $1-V^2\exp(-V^\varepsilon \delta)$. Combining the above, for any $\varepsilon>0$, 
    \begin{align*}
        \bar q_{i,j} = \frac{\sum_{(i,j) \in \cR} q_{i,j}}{|\cR|} \leq  \frac{ (1+\varepsilon)\log V}{V(V-1) \log \lambda },
    \end{align*}
    with high probability. Therefore, for any $\varepsilon>0$,
    \begin{align*}
       \mathsf{Acc}(\bM_t) &\leq \frac{|\Ds|}{V(V-1)}+ \frac{|\cR|\left(1-(1-\bar q_{i,j} )^t \right)}{V(V-1)}\\
       & \leq (1+\varepsilon)\left( p + (1-\rho(\lambda))^2\left( 1-\left( 1-\frac{ (1+\varepsilon)\log V}{V(V-1) \log \lambda }\right)^t \right) \right),
   \end{align*}
   with high probability, which completes the proof of the upper bound. For the lower bound, we observe that if $i \sim j$ and $(i,j)\in \cR \backslash \Ds$, then $q_{i,j} \geq 1/V(V-1)$, because when $i$ and $j$ are chosen in the procedure, the edge $(i,j)$ will be added. This implies that
   \begin{align*}
       \mathsf{Acc}(\bM_t) &= \frac{|\Ds|}{V(V-1)}+ \sum_{\cR \backslash \Ds} \frac{1-(1-q_{i,j})^t}{V(V-1)}\\
       & \geq \frac{|\Ds|}{V(V-1)} + \frac{|\cR\backslash \Ds|}{V(V-1)} \left( 1- \left (1-\frac{1}{V(V-1)} \right )^t\right)\\
       & \geq (1-\varepsilon) \left( p+(1-\rho(\lambda))^2 \left( 1- \left (1-\frac{1}{V(V-1)} \right )^t\right) \right),
   \end{align*}
   with high probability which completes the proof of the lower bound.
\end{proof}
To obtain a more precise description of $\mathsf{Acc}(\bM_t)$, we employ a Poisson branching process to approximate the cluster growth of vertices, which we now define. A Poisson$(\lambda)$ branching process is a model for a population evolving in time, where each individual independently gives birth to a number of children with Poisson$(\lambda)$ distribution. We denote by $Z_n$ the number of individuals in the $n$-th generation, where by default $Z_0=1$. Then $Z_n $ satisfies the recursion relation $Z_n=\sum_{i=1}^{Z_{n-1}} X_{n, i}$, where $\{X_{n, i}\}_{n, i \geq 1} $is a doubly infinite array of i.i.d. Poisson$(\lambda)$ random variables. The total progeny $Y_n$ is then defined as $Y_n = \sum_{i=0}^n Z_n$. $Z_n$ is often called a Galton--Watson branching process and the associated tree is called a Galton--Watson tree.

As in the previous proof, an accurate estimate of $\mathsf{Acc}(\bM_t)$ relies on understanding $q_{i,j}$, the probability that the edge $(i,j)$ will be added in each round. As before, the only edges that will be added are those connected to the giant component (i.e., $i \in \cL_X$ and $j \in \cL_Y$). The proportion of such edges converges to $C_\lambda$ as $V \to \infty$. Recall that
\begin{equation}\label{eqn:qij}
    q_{i,j} = \frac{\sum_{(a,b) \in\cR } \mathbbm{1}((i,j)\in \{(a, z^1), (a, z^2), \dots, (a, z^{k}), (a, b)\}) }{V(V-1)}    
\end{equation}
where $(a, z^1, z^1, \cdots, z^k, b)$ represents the shortest path in $\bM_0$ connecting $a$ and $b$. Equivalently, if we consider the tree generated by a breadth-first search in $\bM_0$ rooted at $i$, then since $i \sim j$, $j$ will be in the tree, and the numerator counts the total number of offspring of $j$ in the tree, including $j$ itself. This is the point at which a rigorous mathematical characterization of the tree becomes challenging. Instead, we approximate the tree and analyze its behavior. It is well-known that when $p=\lambda/V$, the cluster growth (or the breadth-first search at a vertex) can be approximated by a Poisson$(\lambda)$ branching process (see e.g., \cite{Hofstad2016,durrett2010random}). 
For fixed vertex $i$, we define $T$ as a Galton--Watson tree rooted at $i$ with Poisson$(\lambda)$ offspring distribution with depth $L$. We use $T$ to approximate the exploration process at $i$. For $0\leq \ell \leq L$, the number of vertices at level $L-\ell$ is approximately $\lambda^{L-\ell}$. Given that the total number of vertices in $T$ is approximately $(1-\rho(\lambda)) V$, the number of vertices at level $L-\ell$ is also $(1-\rho(\lambda)) V(\lambda-1)/\lambda^{\ell+1}$. For each vertex at level $L-\ell$, the number of its offspring (including itself) equals $k$ with probability $p_\ell(k)$. In this case, the numerator in \eqref{eqn:qij} equals $k$. Combining the above, there are around $(1-\rho(\lambda))V \cdot p_\ell(k)(1-\rho(\lambda))V(\lambda-1)/\lambda^{\ell+1}$ vertex pairs $(i,j)$ in the graph such that $i \in \cL_X$, $j \in \cL_Y$, $q_{i,j} = k/V(V-1)$ and $j$ is located at the $L-\ell$ level in the tree $T$. Ultimately, we arrive at an approximation of the form
\begin{align*}
    \mathsf{Acc}(\bM_t) \sim  p+ C_\lambda \left( 1-  \sum_{\ell=0}^\infty \frac{\lambda-1}{\lambda^{\ell+1}} \sum_{k=1}^\infty p_\ell(k)\left( 1-\frac{k}{V(V-1)} \right)^t \right).
\end{align*}

Beyond Erd\H{o}s-R\'enyi graphs, the term $q_{i,j}$ may not be as explicit. We can define $C$ as the proportion of vertex pairs $(i,j)$ such that $i \sim j$ in $\bM_0$, then $q_{i,j}$ is nonzero for $CV(V-1)$ pairs of vertices. In this case, if we write $a_k = k/V(V-1)$ and define $\mu(k)$ as the probability that $q_{i,j} = a_k$, then we can have a general formula
\begin{align*}
    \mathsf{Acc}(\bM_t) \sim  p+ C \left(1-\sum_{k=1}^\infty \mu(k) \left(1-a_k \right)^t \right).
\end{align*}
The drawback of this formula is the lack of explicit expressions. For a given $\bM_0$, it is unclear how to compute the measure $\mu(\cdot)$ easily.

Next, we provide a qualitative description of the shape of such a mixture of exponentials.
\begin{lemma}\label{lem:shape}
For a fixed constant $0<C<1$ and a probability measure $\mu(\cdot)$ on $\mathbb{Z}_+$ with finite mean $m$, we define 
\begin{align*}
    f(t) = p+ C \left(1-\sum_{k=1}^\infty \mu(k) \left(1-\frac{k}{V(V-1)} \right)^{tV(V-1)} \right).
\end{align*}
Then we have that there exists $0<t_1<t_2$ such that
\begin{align*}
    f(t) = 
    \begin{cases}
    \Theta\left (p+ t \right), \quad &\text{ for } 0\leq t\leq t_1,\\
    \Theta(\log t ), \quad &\text{ for } t_1\leq t\leq t_2,\\
    \Theta(1), \quad &\text{ for } t\geq t_2,
    \end{cases}
\end{align*}
as $V \to \infty$.
\end{lemma}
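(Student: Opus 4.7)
The plan is to pass to the $V \to \infty$ limit, reducing the problem to analysis of a smooth, monotone, concave scalar function of $t$. Setting $N = V(V-1)$, for every fixed $k \geq 1$ and $t \geq 0$ one has $(1 - k/N)^{tN} \to e^{-kt}$, and the summands are dominated by $\mu(k)$, so dominated convergence gives
\begin{equation*}
f(t) \longrightarrow f_\infty(t) := p + C\bigl(1 - \varphi(t)\bigr), \qquad \varphi(t) := \sum_{k=1}^\infty \mu(k)\, e^{-kt}.
\end{equation*}
I would verify that this convergence is uniform on compact subsets of $[0,\infty)$ by expanding $\log(1 - k/N) = -k/N + O(k^2/N^2)$, truncating at some large $K$, and using the finite mean $m$ together with the pointwise bound $(1-k/N)^{tN} \leq e^{-kt}$ (which follows from $\log(1-x) \leq -x$) to control the tail $k \geq K$. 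Up to a $1 + o_V(1)$ factor absorbed into the $\Theta$-constants, it therefore suffices to analyze $f_\infty$.

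The function $\varphi$ is the Laplace transform of $\mu$, so it is smooth, strictly decreasing, and strictly convex on $[0,\infty)$, with $\varphi(0) = 1$, $\varphi(+\infty) = 0$, and $\varphi'(0^+) = -m \in (-\infty, 0)$. Consequently $f_\infty$ is smooth, increasing, and concave, with $f_\infty(0) = p$ and $f_\infty(+\infty) = p + C$. The linear and plateau regimes then follow directly from the endpoint behaviors. For the linear regime, Taylor expansion gives $1 - \varphi(t) = mt + O(t^2)$, so I can fix $t_1 > 0$ small enough that $\tfrac{1}{2}mt \leq 1 - \varphi(t) \leq \tfrac{3}{2}mt$ on $[0, t_1]$; combined with the floor $f_\infty(t) \geq p$, this yields $f_\infty(t) = \Theta(p + t)$ on $[0, t_1]$. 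For the plateau regime, since $\varphi \downarrow 0$ monotonically, I can choose $t_2 > t_1$ with $\varphi(t_2) \leq 1/2$; then $f_\infty(t) \in [p + C/2,\, p + C]$ for all $t \geq t_2$, giving $\Theta(1)$.

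The intermediate regime $[t_1, t_2]$ is the weakest part of the claim, and I expect the difficulty there to be conceptual rather than technical. On the fixed compact interval $[t_1, t_2]$ with $t_1 > 1$, the function $f_\infty$ is continuous and lies between two strictly positive constants, so $f_\infty(t) = \Theta(1)$; the same is trivially true of $\log t$, whence $f_\infty(t) = \Theta(\log t)$ is immediate. This is consistent with the authors' own remark that the appearance of $\log t$ in the second phase is not canonical and could be replaced by any well-behaved monotone increasing concave function. The real work is therefore the uniform convergence $f \to f_\infty$ together with the Taylor and tail estimates for $\varphi$; once these are in hand, selecting $t_1, t_2$ so that the three regimes patch together consistently is straightforward.
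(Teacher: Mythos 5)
Your proof is correct and reaches the same conclusions, but it takes a genuinely different route from the paper's. You first pass to the $V \to \infty$ limit, replacing $f$ by the Laplace-transform function $f_\infty(t) = p + C\bigl(1 - \varphi(t)\bigr)$ with $\varphi(t) = \sum_k \mu(k) e^{-kt}$, and then analyze the smooth limiting function: the linear regime comes from the Taylor expansion $\varphi'(0^+) = -m$, the plateau from the monotone decay $\varphi(t) \to 0$, and the middle regime from compactness. This requires a uniform-convergence step ($f \to f_\infty$ on compacts), which you correctly outline via dominated convergence and the bound $(1-k/N)^{tN} \le e^{-kt}$ but do not carry out in full. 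The paper avoids the limit entirely: it works with $f$ at finite $V$ and uses just two elementary facts. For the linear upper bound it applies Bernoulli's inequality $(1-k/N)^{tN} \ge 1 - tk$ term by term to get $f(t) \le p + Cmt$; for the linear lower bound and the plateau it uses concavity of $f$ together with the single evaluation $f(t_2) \ge p + C(1 - e^{-t_2})$, obtained by dropping every term in the sum except $k=1$. The middle $\Theta(\log t)$ regime is then handled exactly as you note---$f$ lies between two positive constants on the compact interval $[t_1, t_2]$ with $t_1 > 1$, and so does $\log t$, which trivially yields the two-sided bound. The paper's argument is more parsimonious (no Laplace transform, no uniform-convergence lemma, no Taylor remainder) and its bounds hold for every finite $V$ rather than only in the $V \to \infty$ limit; your limiting-function picture, on the other hand, makes the qualitative shape more transparent and more cleanly exposes why the choice of $\log t$ in the middle regime is not canonical, a point the paper acknowledges in prose rather than in the proof.
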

\begin{proof}[Proof of Lemma \ref{lem:shape}]
Fix any $1<t_1<t_2$. Note that $f(t)$ is monotone increasing, concave and always bounded by $1$. We also have
\begin{align*}
    f(t_2) \geq p+C \left(1- \left(1-\frac{1}{V(V-1)} \right)^{t_2 V(V-1)} \right) \geq p+C(1-\exp(-t_2)) = \Theta(1).
\end{align*}
So $f(t) = \Theta(1)$ when $t\geq t_2$. Now when $t\leq t_1$,
\begin{align*}
    f(t) \leq p+ C \left(1- \sum_{k=1}^\infty \mu(k) (1-tk) \right)\leq p+ C  m t.
\end{align*}
Since $f(0) = p$ and $f(t_2) \geq p+C(1-\exp(-t_2))$, by concavity, $f(t)$ is lower bounded by $p+t C(1-\exp(-t_2))/t_2 = \Theta(p+t)$ for any $0\leq t\leq t_1$. Finally for $t_1\leq t\leq t_2$, we note that $f(t_1)\leq f(t)\leq 1$, so easily, $f(t) \leq \log t_1/\log t_1 \leq \log t/\log t_1 = O(\log t)$. Similarly, $f(t) \geq f(t_1) \log t_2/\log t_2 \geq \log t (f(t_1)/\log t_2) \geq \Omega(\log t)$. Therefore, $f(t) = \Theta(\log t)$ for any $t_1\leq t \leq t_2$.
\end{proof}

\subsection{Curve fitting with mixture of exponential formula}
\label{sec:appendix-curve-fit}
To perform curve fitting using the mixture-of-exponential formula, we approximate the infinite sum with three terms in
\begin{align*}
    \mathsf{Acc}(\bM_t) \sim  p+ C \left(1-\sum_{k=1}^\infty \mu(k) \left(1-a_k \right)^t \right).
\end{align*}
Mathematically, we fit the empirical observation against the formula
\[
y(x) = a - b_1r_1^x - b_2r_2^x - b_3r_3^x,
\]
where $x$ is the EntiGraph token count (in millions) and $y(x)$ is the QuALITY QA accuracy.
We use the non-linear least squares method implemented by \cite{2020SciPy-NMeth}.
As a result of this procedure, we obtain the fitted formula
\[
y(x) = 64.5456 - 13.8352\times(0.9989)^x - 8.4705\times(0.8961)^x - 3.932\times(0.0546)^x.
\]
For the implementation of this procedure, we refer readers to our codebase.

\newpage
\section{Synthetic data generation prompts}
\label{sec:synthetic-data-prompts}
We generate two synthetic corpora in this paper: EntiGraph (Appendix \ref{sec:appendix-entigraph-prompts}) and the Rephrase baseline (Appendix \ref{sec:appendix-rephrase-prompts}).
In our experiments, the $\Ds$ is a collection of documents $D$, and our synthetic augmentation procedure is applied to each document $D\in\Ds$.
We will focus on a single document $D$ for the remainder of this section.

\subsection{EntiGraph Prompts}
\label{sec:appendix-entigraph-prompts}
The EntiGraph procedure is described in detail in \S\ref{sec:entigraph-method}.
We will recap the three steps below.
\paragraph{Step 1: Entity extraction.}
The first step is to extract the salient entities from the document $D$ using the \texttt{entity\_extraction} operation (Step 1, \S\ref{sec:entigraph-method}).
The complete \texttt{entity\_extraction} prompt is as follows:
\begin{qualitativeBox}
{\footnotesize
\begin{verbatim}
As a knowledge analyzer, your task is to dissect and understand an
article provided by the user. You are required to perform the
following steps:
1. Summarize the Article: Provide a concise summary of the entire
article, capturing the main points and themes.
2. Extract Entities: Identify and list all significant "nouns" or
entities mentioned within the article. These entities should include
but not limited to:
    * People: Any individuals mentioned in the article, using the
    names or references provided.
    * Places: Both specific locations and abstract spaces relevant to
    the content.
    * Object: Any concrete object that is referenced by the provided
    content.
    * Concepts: Any significant abstract ideas or themes that are
    central to the article's discussion.

Try to exhaust as many entities as possible. Your response should be 
structured in a JSON format to organize the information effectively.
Ensure that the summary is brief yet comprehensive, and the list of
entities is detailed and accurate.

Here is the format you should use for your response:

{
  "summary":  "<A concise summary of the article>",
  "entities": ["entity1", "entity2", ...]
}
\end{verbatim}
}
\end{qualitativeBox}

\paragraph{Step 2: relation analysis.}
The last step is to generate diverse descriptions of relations among two or more entities.
In our experiments, for each document $D$, we enumerate all entity pairs and generate a description for each.
The prompt for generating a description relating a pair of entities is as follows:
\begin{qualitativeBox}
{\footnotesize
\begin{verbatim}
You will act as a knowledge analyzer tasked with dissecting an
article provided by the user. Your role involves two main 
objectives:
1. Rephrasing Content: The user will identify two specific entities
   mentioned in the article. You are required to rephrase the
   content of the article twice:
    * Once, emphasizing the first entity.
    * Again, emphasizing the second entity.
2. Analyzing Interactions: Discuss how the two specified entities
   interact within the context of the article.
\end{verbatim}
}
\end{qualitativeBox}
\begin{qualitativeBox}
{\footnotesize
\begin{verbatim}
Your responses should provide clear segregation between the
rephrased content and the interaction analysis. Ensure each section
of the output include sufficient context, ideally referencing the
article's title to maintain clarity about the discussion's focus.
Here is the format you should follow for your response:

### Discussion of <title> in relation to <entity1>
<Rephrased content focusing on the first entity>

### Discussion of <title> in relation to <entity2>
<Rephrased content focusing on the second entity>

### Discussion of Interaction between <entity1> and <entity2> 
    in context of <title>
<Discussion on how the two entities interact within the article>
\end{verbatim}
}
\end{qualitativeBox}
We also generate synthetic data involving three entities, using the prompt below:
\begin{qualitativeBox}
{\footnotesize
\begin{verbatim}
You will act as a knowledge analyzer tasked with dissecting an
article provided by the user. Your role involves three main
objectives:

1. Rephrasing Content: The user will identify three specific
   entities mentioned in the article. You are required to rephrase
   the content of the article three times:
    * Once, emphasizing the first entity.
    * Again, emphasizing the second entity.
    * Lastly, emphasizing the third entity.
2. Analyzing Interactions: Discuss how these three specified
   entities interact within the context of the article.

Your responses should provide clear segregation between the
rephrased content and the interaction analysis. Ensure each section
of the output include sufficient context, ideally referencing the
article's title to maintain clarity about the discussion's focus.
Here is the format you should follow for your response:

### Discussion of <title> in relation to <entity1>
<Rephrased content focusing on the first entity>

### Discussion of <title> in relation to <entity2>
<Rephrased content focusing on the second entity>

### Discussion of <title> in relation to <entity3>
<Rephrased content focusing on the third entity>

### Discussion of Interaction between <entity1>, <entity2> and
    <entity3> in context of <title>
<Discussion on how the three entities interact within the article>
\end{verbatim}
}
\end{qualitativeBox}

\subsection{Rephrase prompts}
\label{sec:appendix-rephrase-prompts}
For the rephrase corpus, we adapt the prompt from \cite{wrap} to our setting of books and articles.
We provide four rephrase styles below:

\textbf{Easy rephrase:}
\begin{qualitativeBox}
{\footnotesize
\begin{verbatim}
You are an assistant to help read a article and then rephrase it in
simpler terms. The user will provide you with an article with
\end{verbatim}
}
\end{qualitativeBox}
\begin{qualitativeBox}
{\footnotesize
\begin{verbatim}
title, year, content. You need to generate a paraphrase of the same
article using a very small vocabulary and extremely simple 
sentences that a toddler will understand. Remember to keep the
meaning and every content of the article intact, including the
title, year, etc.
\end{verbatim}
}
\end{qualitativeBox}

\textbf{Medium rephrase:}
\begin{qualitativeBox}
{\footnotesize
\begin{verbatim}
You are an assistant to help read a article and then rephrase it in
different terms. The user will provide you with an article with
title, year, content. You need to generate a paraphrase of the same
article using diverse and high quality English language as in 
sentences on Wikipedia. Remember to keep the meaning and every
content of the article intact, including the title, year,
etc.
\end{verbatim}
}
\end{qualitativeBox}

\textbf{Hard rephrase:}
\begin{qualitativeBox}
{\footnotesize
\begin{verbatim}
You are an assistant to help read a article and then rephrase it in
more sophisticated terms. The user will provide you with an article
with title, year, content. You need to generate a paraphrase of the
same article using very terse and abstruse language that only an
erudite scholar will understand. Remember to keep the meaning and
every content of the article intact, including the title, year,
etc.
\end{verbatim}
}
\end{qualitativeBox}

\newpage
\section{Additional evaluation details of main experiments}

\subsection{QuALITY QA question set}
\label{sec:appendix-qa-eval-detail}
In this section, we provide more details of evaluation on the QuALITY QA test queries.
Throughout the closed-book QA experiments, we use a fixed 5-shot prompt below:
\begin{qualitativeBox}
{\footnotesize
\begin{verbatim}
## Example 1
### Question
In the context of "Les Misérables", written by Victor Hugo in 1862,
what is the main setting of the novel? There is only one correct
choice.
### Choices
A. London
B. Madrid
C. Paris
D. Rome
### Thought Process and Answer
Thought process: "Les Misérables" is primarily set in Paris, making
C the correct choice. London, Madrid, and Rome are significant
cities in other literary works but not in Victor Hugo's "Les
Misérables". There is only one correct choice.
Answer: C.

## Example 2
### Question
In the context of "Brave New World", written by Aldous Huxley in
1932, what substance is widely used in the society to control
citizens' happiness? There is only one correct choice.
### Choices
A. Gold
B. Soma
C. Silver
D. Iron
### Thought Process and Answer
Thought process: In Aldous Huxley's "Brave New World," Soma is used
as a means to maintain social control by ensuring citizens'
happiness, making B the correct choice. Gold, Silver, and Iron are
not the substances used for this purpose in the book.
Answer: B.

## Example 3
### Question
In the context of "Romeo and Juliet", written by William
Shakespeare in the early 1590s, what are the names of the two
feuding families? There is only one correct choice.
Choices:
A. Montague and Capulet
B. Bennet and Darcy
C. Linton and Earnshaw
D. Bloom and Dedalus
### Thought Process and Answer
Thought process: In William Shakespeare's "Romeo and Juliet," the
two feuding families are the Montagues and the Capulets, making A
the correct choice. The Bennets and Darcys are in "Pride and
Prejudice", the Lintons and Earnshaws in "Wuthering Heights", and
Bloom and Dedalus in "Ulysses".
Answer: A.

## Example 4
### Question\end{verbatim}
}
\end{qualitativeBox}

\begin{qualitativeBox}
{\footnotesize
\begin{verbatim}
In the context of "1984", written by George Orwell in 1949, what is 
the name of the totalitarian leader? There is only one correct
choice.
### Choices
A. Big Brother
B. O'Brien
C. Winston Smith
D. Emmanuel Goldstein
### Thought Process and Answer
Thought process: In George Orwell's "1984," the totalitarian leader
is known as Big Brother, making A the correct choice. O'Brien is a
character in the novel, Winston Smith is the protagonist, and
Emmanuel Goldstein is a rebel leader.
Answer: A.

## Example 5
### Question
In the context of "Moby-Dick", written by Herman Melville in 1851,
what is the name of the ship's captain obsessed with hunting the
titular whale? There is only one correct choice.
### Choices
A. Captain Hook
B. Captain Nemo
C. Captain Flint
D. Captain Ahab
### Thought Process and Answer
Thought process: In Herman Melville's "Moby-Dick," the ship's
captain obsessed with hunting the whale is Captain Ahab, making D
the correct choice. Captain Nemo is in "Twenty Thousand Leagues
Under the Sea", Captain Flint in "Treasure Island", and Captain
Hook in "Peter Pan".
Answer: D.

## Example 6
\end{verbatim}
}
\end{qualitativeBox}
If the output of the model correctly follows the format of the few-shot prompt, its last two characters should be ``\texttt{A.}'', ``\texttt{B.}'', ``\texttt{C.}'', or ``\texttt{D.}''.
However, the model sometimes cannot successfully follow the few-shot prompting format, particularly for the continually pretrained model.
As a result, in all our evaluations, we sample the response 64 times, and only select the ones that can be parsed in the correct format.
Out of these 64 attempts, we randomly select among the valid answers to give the final answer.
Note that this is \emph{different} from majority voting in self-consistency prompting \citep{wang2023selfconsistency}.

\subsection{Closed-book Summarization}
\label{sec:appendix-eval-summary-detail}

\paragraph{Automated evaluation metric.} 
We design a three-stage evaluation procedure:
(i) In the first stage, we use GPT-4\footnote{Specifically, we use the \texttt{gpt-4-turbo} model as of Aug. 19, 2024.} to break the summary into atomic claims, similar to \cite{min2023factscorefinegrainedatomicevaluation};
(ii) In the second stage, we provide both the list of claims and the source article to a judge model (also GPT-4).
We ask the judge model to determine whether each claim is true or false, based on the source article.
If the claim is true, we further ask the model to determine whether the claim is salient (contributes to the main message of the article) or cosmetic (factual details that do not help understand the main message).
(iii) Finally, for each summary, we obtain its number of false and salient claims and normalize it by the corresponding count from the human summary.
We report the average of these normalized metrics across the \quality~corpus articles in Figure \ref{fig:exp-summaryeval}.

\paragraph{Prompts to generate summaries.} For summarization evaluation with EntiGraph Instruct and Raw Instruct, we apply the following two prompts to obtain two summaries of increasing length.
\begin{table}[ht]
    \centering
    \begin{tabular}{l p{12.5cm}}
    \toprule
    \ding{228} & \textbf{Short prompt:} \texttt{Summarize the article \{article title\} by \{author name\} for me.} \\
    \midrule
    &  \texttt{Give a short summary of ``Cosmic Yo-Yo'' by Ross Rocklynne.} \\
    \midrule[\heavyrulewidth]
    \ding{228} &\textbf{Long prompt:} \texttt{Write an extremely long and detailed article regarding the book \{article title\} by \{author name\}.}  \\
    \midrule
    &  \texttt{Write an extremely long and detailed article regarding the book ``Cosmic Yo-Yo'' by Ross Rocklynne.} \\
    \bottomrule
    \end{tabular}
    \caption{Summarization prompt for EntiGraph Instruct, Raw Instruct, and Reprhase Instruct.}
    \label{tbl:appendix-summary-prompts}
    \vspace{-0.5cm}
\end{table}
We provide three examples of summarization outputs below.
For each of the three examples, we will first present the human summary for this article to provide context for the example, and then present the short summary from the two summarizers.

\paragraph{Example 1.} The first example is ``Cosmic Yo-Yo'' by Ross Rocklynne.
\begin{qualitativeBox}
\textbf{Human summary:}
Bob Parker, the President of Interplanetary Hauling \& Moving Co., sells asteroids to wealthy people on earth. Clients ask for asteroids with size parameters and specifications, and Bob finds them in space and hauls them to earth. His company is almost bankrupt because a rival company, Saylor \& Saylor, stole his idea and now offers the same services. Bob receives mail from Mr. Andrew S. Burnside with a request for an asteroid that he would like to use in an upcoming wedding.
Bob and his partner Queazy set out to find the perfect asteroid for Mr. Burnside, although they know it’s a longshot. Fairly quickly, they find one that looks perfect. The men land on the asteroid, and Bob deploys his atomic-whirl spectroscope to test it. Suddenly, a beautiful woman interrupts him and demands that they leave the asteroid. She pulls out her spasticizer gun before telling them that they can have it in a month after she’s gone. Bob explains that they are desperate, but the girl retorts that her fate is worse than death if she leaves.

Suddenly, the Saylor brothers’ ship appears, and Bob tells the girl that they have to fight this enemy together. Wally and Billy Saylor, along with three other men, jump out of the ship. Bob tells them that Mr. Burnside has ordered this asteroid, and the Saylor 
brothers say that they received the same order. Bob quickly grabs the girl’s spasticizer while Queazy throws his body at Billy. However, Wally manages to shoot the gun out of Bob’s hand and attack him. Bob is knocked unconscious in the scuffle. 

When Bob wakes up, he is completely alone, floating in space. He panics because he has very little oxygen left. Finally, he hears Queazy’s voice explaining that the girl used her ship’s technology to find them both. The mystery girl introduces herself as Starre Lowenthal, the granddaughter of Mr. Burnside. She concedes that this entire mission was fake. She told her grandfather that she would only marry her fiance Mac if he could get this particular asteroid, and then she made plans to conquer and protect the asteroid so it could not be supplied for the wedding. 

Bob is confident that they can reach the Saylor brothers before they bring the asteroid back to earth, but his plan does nothing to protect Starre from marrying a man she doesn’t love. She agrees to help Bob and Queazy. Within five days, Bob realizes he is in love with Starre. 

Starre compares her small ship to a yo-yo, and Bob gets an idea - they will use Starre’s ship like a yo-yo to retrieve the asteroid from the Saylor brothers. Once the team catches up to the Saylor brothers, Bob flings Starre’s ship at the asteroid several times, and Wally calls them to tell them that they might die as a result of the damage their ship has sustained. Bob makes it clear that they have no intention of stopping, and the Saylor brothers release the asteroid.     
\end{qualitativeBox}

\begin{qualitativeBox}
\textbf{EntiGraph Instruct with short prompt:} Cosmic Yo-Yo is a science fiction story by Ross Rocklynne about Bob Parker and Queazy, two interplanetary haulers who are hired to deliver an asteroid to Earth for a wealthy client's backyard wedding. However, they face competition from their rivals, the Saylor brothers, who also want to deliver the asteroid. The story follows their race to deliver the asteroid before the deadline, using a yo-yo-like device to maneuver \end{qualitativeBox}

\begin{qualitativeBox}the asteroid and outsmart their rivals. The story is filled with action, adventure, and a touch of romance.
\end{qualitativeBox}

\begin{qualitativeBox}
\textbf{Raw Instruct with short prompt:} Cosmic Yo-Yo by Rocklynne, Ross is a science fiction novel that follows the story of a young woman named Susan who discovers a mysterious device that can transport her through time and space. She uses the device to travel to different planets and meet different alien species, but soon realizes that her actions have consequences and she must find a way to return home. The novel explores themes of identity, responsibility, and the consequences of our actions.
\end{qualitativeBox}

\paragraph{Example 2.} The second example is ``Beach Scene'' by Marshall King.
We can see that EntiGraph correctly gives a summary of the story while Raw Instruct mistakes the book as a painting.
\begin{qualitativeBox}
\textbf{Human summary:}
Purnie, an animal, is going to see the ocean on his fifth birthday. He has heard stories about this place, and experiencing it firsthand is 
surreal for him. Purnie is careful not to disturb the animals he sees along the way because he has frozen time, and everything must resume normally when he unfreezes it. He knows that time-stopping is forbidden for animals his age, but he chooses to believe that his family will be proud of his bravery. 

Finally, he sees the ocean in front of him, and he resumes time. He does a head-stand and feels weak and dizzy. These feelings are a
result of the time-stop, and he knows it. Purnie approaches some humans on the beach. A man named Forbes is in the middle of explaining to his captain, Benson, that he has found 17 planets to claim as his own. Forbes is hellbent on raising his FORBES flag as soon as possible. He is eager to stake his claim to the land and says that his mission is much bigger than real estate alone. Benson retorts that yes, his mission is bigger than just real estate because his paperwork says that Forbes will own all of the inhabitants of the planets he claims as well as the land. 

The crew members use a special machine and  find radiation emanating from Purnie. Forbes demands that they put the animal in a box. Benson protests and reminds Forbes that it’s against Universal Law, but Forbes insists. Purnie experiences his first-ever impulse to run away with fear when a noose comes towards him. He goes back to pick up his fruit, and Forbes shoots him in the leg. When the man throws the noose again, Purnie involuntarily stops time. He drags himself up the knoll where he originally came from. 

The humans are astonished when time resumes and Purnie is not where he was a split second ago. They spot him up on top of a pile of petrified logs, and suddenly the logs fall down the hill and pin the men down. Purnie is shocked and regretful. The whole thing was an accident. He deliberately stops time and uses all of his remaining strength to lift the logs off of the humans. Purnie begins to lose consciousness, and he knows that he must resume time or he will die. After pouring all of his strength into this action, time does begin again. The humans resume life and feel as though they have gone mad. They know that they were just facing death by drowning, and now they are free. The logs were so heavy that it would have taken superhuman strength to move them. Forbes, in particular, has really gone mad, and he laughs to himself uncontrollably. Benson believes that Purnie was responsible for moving the logs, but of course that seems physically impossible. 
 
Purnie stares off at the beautiful ocean views and watches the men leave in their vehicle as he dies.  
\end{qualitativeBox}

\begin{qualitativeBox}
\textbf{EntiGraph Instruct with short prompt:} Beach Scene is a short story by Marshall King that follows a young boy named Purnie who discovers the power of stopping time. On his fifth birthday, Purnie decides to explore the beach and use his newfound ability to stop time to play with the animals and explore the ocean. However, his adventures take a turn when he encounters a group of explorers who want to capture him for their own gain. Purnie must use his powers to save himself and the animals from the explorers' greed.
\end{qualitativeBox}

\begin{qualitativeBox}
\textbf{Raw Instruct with short prompt:} Beach Scene by King, Marshall is a painting that depicts a group of people enjoying a day at the beach. The painting features a vibrant blue ocean, a sandy beach, and a group of people relaxing, playing, and swimming in the water. The painting is known for its bright colors and lively atmosphere, capturing the joy and energy of a day at the beach.
\end{qualitativeBox}

\paragraph{Example 3.} The third example is ``The Madman From Earth'' by ``Keith Laumer''.
\begin{qualitativeBox}
\textbf{Human summary:}
Following the departure of Consul Whaffle, Retief has taken over as Consul for the Terrestrial States with the Terrestrial Consulate General on the planet Groac. His administrative assistant, Miss Yolanda Meuhl, wants him to attend Groacian cultural events, but Retief is more interested in addressing the nine-year-old mystery of the disappearance of a Terrestrial cruiser, the ISV Terrific--an event which was followed by a coup d'etat enacted by the current Groacian government. Much to Miss Meuhl's dismay, Retief shirks his cultural duties and makes his way to the Foreign Office Archives, whereupon he is promptly barred from entering by a pale-featured Archivist speaking in the throat-bladder vibrations of the native Groacians. Because of the Archivist's insistence that \"outworlders\" cannot access the archives, Retief begins walking back to the Consulate and stops at a bar for a drink. At the, a drunken Groacian approaches Retief and threatens to cage him and put him on display as a freak. The bartender orders the drunken Groacian out of the bar, and Retief follows him, ultimately beating him up for information. When Retief returns to the Consulate, Miss Meuhl informs him that two angry Groaci await him in his office. One is Fith, an employee of the Terrestrial Desk at the Ministry of Foreign Affairs; the other is Shluh, a representative of the Internal Police. They are there to investigate reports that Retief has assaulted a Groacian national--an accusation Retief ignores in order to launch into his own accusations that the Groaci were engaged in a cover-up of the whereabouts of the ISV Terrific. Miss Meuhl continually interjects, expresses horror at Retief's claims, and apologizes to the Groacians on behalf of the Terrestrial Consulate. Despite the persistent denials of the Groacians, Retief continues his accusations, suggesting the coup d'etat was an effort to silence government officials with knowledge of the truth of what happened to the cruiser and its crew. Then he reveals what he discovered from the drunken Groacian: The crew of the ISV Terrific had been caged and paraded through the streets of Groac and then neglected until they died. Fith and Shluh finally admit the truth and offer to show Retief the hidden cruiser in order to show their contrition. When Retief sees the ship, he once again accuses the Groacians of attempting to mislead him, saying that this is a lifeboat, and he demands to see the actual ship. Fith has had enough and threatens to arrest Retief, who yields and goes back to the Consulate. There, Miss Meuhl is at her wits end. Retief orders her to barricade herself inside the office while he goes to the Foreign Ministry to gather more evidence. When he returns, Miss Meuhl informs him she has reported him to Regional Headquarters, and Retief learns he has been relieved of his post. Soon after, the Groacians appoint Miss Meuhl to his position, and Fith and Shluh enter to arrest him.
\end{qualitativeBox}

\begin{qualitativeBox}
\textbf{EntiGraph Instruct with short prompt:} The Madman From Earth is a science fiction novel by Keith Laumer, published in 1959. The story follows Retief, a diplomat from Earth, who is sent to the planet Groac to investigate the disappearance of a Terrestrial cruiser and its crew. Retief uncovers a conspiracy by the Groacian government to cover up the incident and hide evidence from Earth. He must navigate through political intrigue, cultural misunderstandings, and dangerous situations to uncover the truth and bring justice to the missing crew. The novel is a satirical take on diplomacy and bureaucracy, with Retief's unorthodox methods and wit standing out against the rigid and often absurd procedures of the diplomatic corps.
\end{qualitativeBox}

\begin{qualitativeBox}
\textbf{Raw Instruct with short prompt:} The Madman From Earth by Laumer, Keith is a science  \end{qualitativeBox}

\begin{qualitativeBox}fiction novel that follows the story of a man named John Smith who is transported to a distant planet called Earth Two. There, he discovers that he is the only human on the planet and must navigate a world filled with strange creatures and dangerous obstacles. Despite his initial confusion and fear, John eventually adapts to his new surroundings and becomes a respected member of the alien society. However, when he is eventually transported back to Earth One, he realizes that he has been gone for over a thousand years and must grapple with the changes that have occurred during his absence.
\end{qualitativeBox}

\begin{table}[ht]
    \centering
    \begin{tabular}{l p{13.5cm}}
    \toprule
    \ding{228} & \textbf{Explicit article reference instruction:}  Summarize the article ``Defining Decay Down'' by ``David Plotz'' for me. \\
    \midrule
    & The article "Defining Decay Down" by David Plotz discusses how the dental industry has changed over the years. In the past, dentists were primarily focused on treating cavities and other dental issues, but as the prevalence of cavities decreased due to the widespread use of fluoride, the industry had to adapt. Dentists began to focus on cosmetic dentistry, offering services like teeth whitening and orthodontics to appeal to the vanity of aging baby boomers. The article also discusses how dentists have become more customer-friendly, with amenities like virtual reality glasses and massage therapy. Despite the shift towards elective procedures, the dental industry has continued to thrive, with dentists earning higher salaries than ever before. \\
    \midrule[\heavyrulewidth]
    \ding{228} &\textbf{Implicit article reference:} How has the dentistry practice in United States has changed?  \\
    \midrule
    &  The dentistry practice in the United States has undergone significant changes in recent years. Here are some of the ways in which it has changed:

    1. Focus on cosmetic dentistry: In recent years, there has been a growing demand for cosmetic dental procedures such as teeth whitening, orthodontics, and veneers. Dentists have responded to this demand by offering more cosmetic services.
    
    2. Use of technology: Dentists are now using advanced technology such as intra-oral cameras, 3D imaging, and laser technology to diagnose and treat dental problems. This technology has made dental procedures more accurate and efficient.
    
    3. Emphasis on preventive care: Dentists are now focusing more on preventive care to help patients avoid dental problems. This includes regular check-ups, cleanings, and fluoride treatments.
    
    4. Increased competition: With the rise of dental schools and the influx of foreign-trained dentists, the dental industry has become more competitive. Dentists are now offering more services and amenities to attract and retain patients.
    
    5. Focus on patient experience: Dentists are now focusing on providing a positive patient experience. This includes offering amenities such as virtual reality glasses, massage therapy, and entertainment during procedures.
    
    Overall, the dentistry practice in the United States has become more patient-centered, technology-driven, and focused on preventive care and cosmetic procedures. \\
    \midrule[\heavyrulewidth]
    \ding{228} &\textbf{Cross article instruction:} Compare David Plotz's commentary style between how he analyze American dentistry and how he discuss the movie Fight Club?  \\
    \midrule
    &  David Plotz's commentary style is different when he analyzes American dentistry and when he discusses the movie Fight Club. In his analysis of American dentistry, Plotz uses a critical and investigative approach, highlighting the over-commercialization and unnecessary procedures in the industry. He uses statistics and expert opinions to support his arguments and provides practical advice for readers to avoid unnecessary dental procedures.

    On the other hand, when discussing the movie Fight Club, Plotz adopts a more satirical and humorous tone. He uses pop culture references and witty commentary to analyze the themes and messages of the movie. He also provides a critical analysis of the film's characters and their motivations, while also acknowledging the movie's cultural impact.
    
    Overall, Plotz's commentary style is more serious and investigative when analyzing American dentistry, while his commentary on Fight Club is more satirical and humorous. \\
    \bottomrule
    \end{tabular}
    \caption{Complete instruction following example used in Table \ref{tbl:exp-instruct-example} from Section \ref{sec:exp-instruct-result}.}
    \label{tbl:appendix-instruct-example}
    \vspace{-0.5cm}
\end{table}
\clearpage

\subsection{Additional quantitative summary evaluation}
\label{sec:appendix-summary-token-matched}

Our main summarization evaluation focused on testing the summarization ability of our strongest CPT models and the GPT baselines, which made a fair comparison to baselines such as rephrase difficult due to the difference in total token counts.

We perform a controlled comparison between EntiGraph and Rephrase CPT under subsampling the synthetic dataset and find that much like for the QA case, EntiGraph matches or improves upon Rephrase CPT, though the gains here are generally smaller. 

Concretely, we apply the same instruction procedure described in \S\ref{sec:exp-instruct-result} to the Raw CPT and Rephrase CPT models from \S\ref{sec:exp-cpt-procedure}, obtaining two additional instruction-tuned models that have knowledge about QuALITY books.
In addition, we also subsample 29M tokens out of the 455M token EntiGraph corpus to token-match the Raw and Rephrase corpus, and refer to the corresponding instruction tuned model as EntiGraph-29M.

\begin{figure}[ht]
\centering
\includegraphics[width=0.7\textwidth]{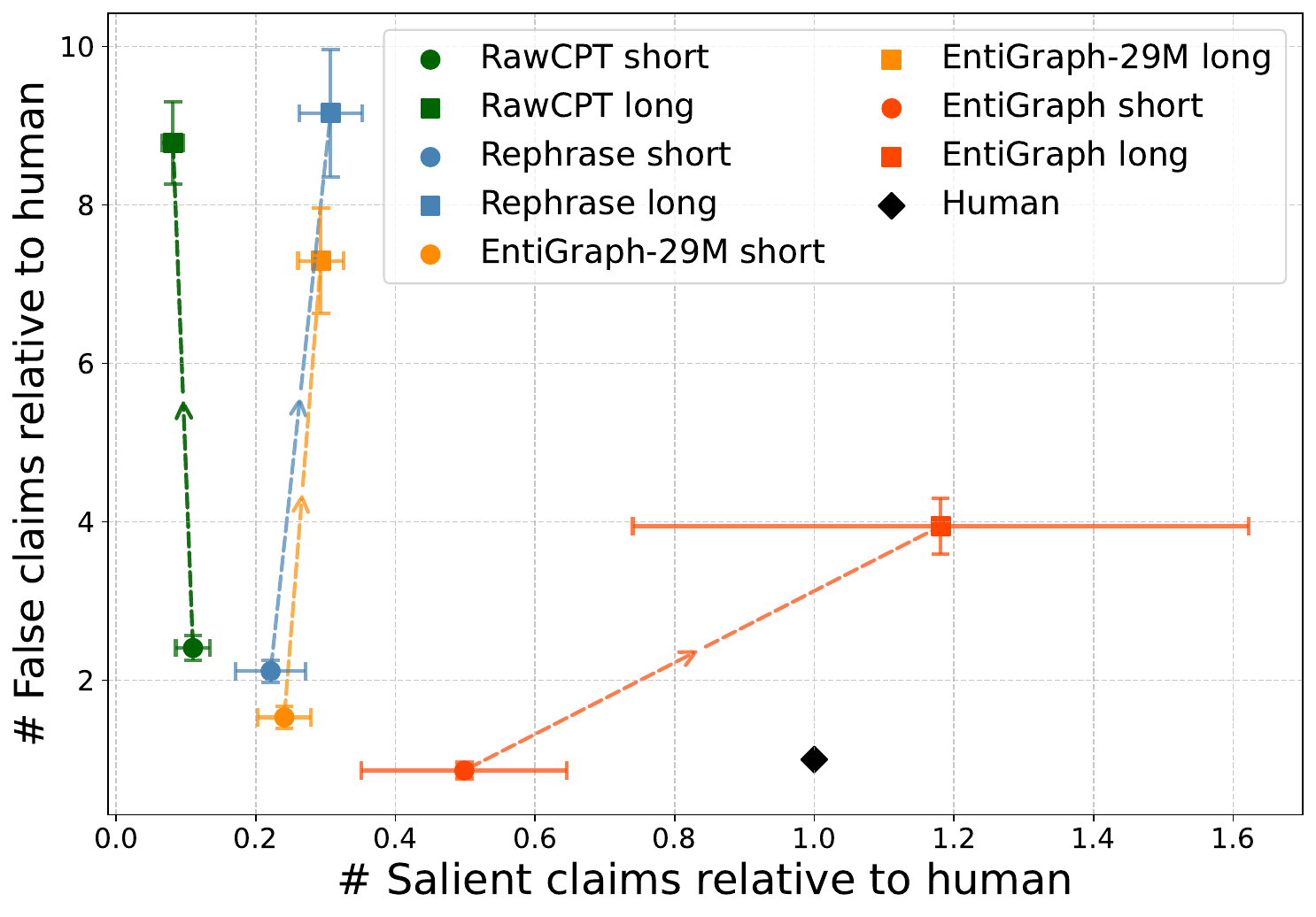}
\caption{Closed-book summarization: number of false claims ($y$-axis) versus number of salient claims ($x$-axis) normalized by the human summary.
Same setup as in \S\ref{sec:exp-instruct-result}.}
\label{fig:appendix-addditional-summary}
\end{figure}

Figure \ref{fig:appendix-addditional-summary} shows that EntiGraph summaries for the short prompt have significantly fewer false claims while having a comparable number of salient claims. The trend holds for the longer summary prompt, with clear separation in the error bars for the false claims gap between EntiGraph and Rephrase baselines, and overlap in the error bars for the salient claims count. 

Finally, we also see clear improvements in scaling from 29M to the full EntiGraph model, with significant reductions in false claims for both the short and long prompts, suggesting that much like in the QA case, EntiGraph could bring improvements to knowledge-intensive downstream tasks through additional scale.

\end{document}